\let\c@table\c@figure
\renewenvironment{abstract}
 { \normalsize
  \list{}{\setlength{\leftmargin}{.0cm}%
    \setlength{\rightmargin}{\leftmargin}}%
  \item {\bf \abstractname.}\relax}
 {\endlist}
\theoremstyle:=definition,remark,plain\do{%
        \expandafter\g@addto@macro\csname th@\theoremstyle\endcsname{%
            \addtolength\thm@preskip\parskip
            }%
        }
\newcommand{\cmark}{\ding{51}}
\newcommand{\xmark}{\ding{55}}
\theoremstyle{plain}
\newtheorem{thm}{Theorem}[section]
\newtheorem*{thmx}{Theorem}
\newtheorem{prop}[thm]{Proposition}
\newtheorem{lem}[thm]{Lemma}
\theoremstyle{definition}
\newtheorem{defi}[thm]{Definition}
\newcommand{\Nat}{\mathbb{N}}
\newcommand{\restr}{\hspace{-0.05cm}\upharpoonright\hspace{-0.05cm}}  
\DeclarePairedDelimiter{\floor}{\lfloor}{\rfloor}
\newcommand{\bigon}{\textrm{\textup O}\hspace{0.02cm}(1)}
\newcommand{\absb}[1]{\big|\hspace{0.03cm}{#1}\hspace{0.03cm}\big|}
\newcommand{\abs}[1]{|\hspace{0.05cm}{#1}\hspace{0.05cm}|}
\newcommand{\parb}[1]{\big({#1}\big)}
\newcommand{\sqbrad}[2]{\left\{\hspace{0.05cm}{#1}\ :\ {#2}\hspace{0.05cm}\right\}}
\newcommand{\ce}{c.e.\ }
\newcommand{\pf}{prefix-free }
\newcommand{\fand}{\ \wedge\ }
\newcommand{\XX}{\textsf{\textup X}}
\newcommand{\OO}{\textsf{\textup O}}
\newcommand{\TT}{\textsf{\textup T}}
\newcommand{\XXss}{\hspace{0.05cm}\textsf{\textup X}}
\newcommand{\OOss}{\hspace{0.05cm}\textsf{\textup O}}
\newcommand{\TTss}{\hspace{0.05cm}\textsf{\textup T}}
\newcommand{\twome}{2^{\omega}} 
\newcommand{\twomel}{2^{<\omega}}
\newcommand{\geqa}{\overset{\textrm{\tiny $+$}}{\geq}}
\newcommand{\eqa}{\overset{\textrm{\tiny $+$}}{=}}
\newcommand{\wedga}{\ \wedge\ }
\newcommand{\asto}{^{\ast}}
\newcommand{\hthree}{\hspace{0.3cm}}
\title{Compression of enumerations and gain\thanks{Supported by Beijing Natural Science Foundation (IS24013).}}
\author{George Barmpalias}\author{Xiaoyan Zhang} \author{Bohua Zhan\thanks{Authors are in alphabetical order..} }
\affil{State Key Lab of Computer Science, Institute of Software\\ Chinese Academy of Sciences, Beijing, China}
\begin{document}
\maketitle
\begin{abstract}
\noindent 
We study the compressibility of  enumerations in the context of 
Kolmogorov complexity, focusing on strong and weak forms of compression and their    
{\em gain}:  the amount of auxiliary information embedded in the compressed enumeration.
The existence of strong compression and weak gainless compression is shown for any computably enumerable (c.e.) set.
The density problem of c.e.\ sets with respect to their prefix complexity  is reduced to the 
question of whether every c.e.\ set  is well-compressible, which we study via enumeration games.
\end{abstract}

\section{Introduction}\label{VjB4aXSE24}
Given an effective enumeration of a set $A\subseteq\Nat$, we are interested in
obtaining a {\em compression} of it, in the form of an  enumeration of another set $D$ which:
\begin{equation}\label{HRsx2H3tm}
\textrm{essentially contains the information in $A$ in a  {\em compact} form.}
\end{equation}
In \S\ref{5OMy18z6Fq} we will see that this is the key to an open problem in Kolmogorov complexity of c.e.\ sets, 
but it is also  interesting in its own right.

To be specific we express \eqref{HRsx2H3tm} in terms of Kolmogorov complexity. 
We identify sets $A\subseteq\Nat$ with their characteristic sequence, where a 1 on position $i$ 
indicates the membership of $i$ in $A$. In this way $A\restr_{n}$ denotes the $n$-bit prefix of $A$  
but also the restriction of the set $A$ to members $<n$.

By {\em essentially} in \eqref{HRsx2H3tm} we mean indifference to finite errors: for certain non-decreasing sequence $(\ell_n)$ with $\ell_n\leq  n$, the strings $D\restr_{\ell_n}$ are effectively mapped to an $n$-bit string of constant Hamming-distance from $A\restr_n$.
When $A, D$ are computably enumerable
(c.e.), the latter can be expressed as
\[
C(A\restr_n\mid D\restr_{\ell_n})=\bigon
\]
where $C(\sigma\mid\tau)$ denotes the conditional Kolmogorov complexity:
the length of the shortest program that can generate $\sigma$ from input $\tau$.
Requiring that
\[
\ell_n\ll n
\hspace{0.3cm}\textrm{or}\hspace{0.3cm}
\parb{\ell_n=n\wedga \abs{D\restr_n}\ll \abs{A\restr_n}}
\]
where $\ll$ means `considerably smaller than', is a natural way to express that 
$D$ is more {\em compact} than $A$ and its enumeration is a compression of the one of $A$.
\begin{defi}[Strong compression]\label{mnfu42bwoIa}
Given \ce sets $A, D\subseteq\Nat$ such that
\begin{equation}\label{WK2nBAOK8O}
C(A\restr_n \mid D\restr_{\floor{n/2}})=\bigon
\end{equation}
we say that $D$ is  a {\em strong compression of $A$} with {\em gain} $n\mapsto C(D\restr_{\floor{n/2}} \mid A\restr_n)$. 
If  
$$C(D\restr_{\floor{n/2}} \mid A\restr_n)=\bigon$$
then $D$ is  a {\em gainless}  strong compression of $A$, and $A$ is  
{\em well-compressible}.
\end{defi}
Condition \eqref{WK2nBAOK8O} indicates that the information in $A\restr_n$ is contained in the
first $\floor{n/2}$ bits of $D$. This is a strong form of compression which, as we will soon see, often forces additional information 
into $D\restr_{\floor{n/2}}$: information that is not recoverable from $A\restr_n$. The latter
is quantified by the {\em gain} of the strong compression.

At this point the reader may be questioning the choice of $\ell_n:=\floor{n/2}$ in Definition
\ref{mnfu42bwoIa}.  Kolmogorov complexity, however, is only precise up to a constant, 
so the notion of {\em strong $\epsilon$-compressibility}  we get by replacing  $1/2$ by any $\epsilon\in (0,1)$ 
is equivalent, in the sense that given any \ce set $A$, by iteration: 
\[
\textrm{if $A$  has  a strong compression,  it also has a strong  $\epsilon$-compression. }
\]
By \citep[Corollary 2.6 and Theorem 2.7]{iplBarmpaliasHLM13}   
the halting problem $H$ with respect to the standard  numbering of all programs (known as a Kolmogorov numbering) 
is well-compressible (it has a gainless strong compression).\footnote{It is shown that $C(H\restr_n\mid H\restr_{\floor{n/2}})=\bigon$ 
while $C(H\restr_{\floor{n/2}}\mid H\restr_n)=\bigon$ is immediate.} By the same result:
\[
\textrm{every linearly-complete \ce set  is well-compressible.}
\]
It is not known if every \ce set is well-compressible. 

In \S\ref{4r6M2bsUrA} we give a method for strongly compressing any given \ce set.
%
\begin{thm}\label{nolyEgDBIB}
Given any computable enumeration of any set we can effectively enumerate a strong compression of it.
\end{thm}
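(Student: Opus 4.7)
The plan is to construct a c.e.\ set $D$ whose length-$m$ prefix encodes $\lceil a(2m)/2\rceil$ within $\bigon$ additive error, where $a(n) = |A\restr_n|$. Granted such $D$, the decoder takes $\sigma = D\restr_m$, reads $m$ and $k = |\sigma|$, concludes that $a(2m) \in [2k - c,\, 2k + c]$ for some fixed $c$, enumerates $A$ until $a_s(2m)$ enters this window, and outputs $A_s\restr_{2m}$. By construction this differs from $A\restr_{2m}$ in at most $\bigon$ positions, hence from $A\restr_n$ with $n\in\{2m,2m+1\}$ in $\bigon$ positions, so the equivalence noted in Section~\ref{VjB4aXSE24} yields the strong-compression bound $C(A\restr_n \mid D\restr_{\floor{n/2}}) = \bigon$.

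First I would fix a one-element-per-stage enumeration $A_0 \subseteq A_1 \subseteq \cdots$ of $A$ and set $\delta_s(m) = \lceil |A_s \cap [0, 2m)|/2\rceil$; both $\delta_s(m)$ and its pointwise limit $\delta(m)$ are non-decreasing in $s$ and in $m$. The construction then runs stage by stage: at each stage scan $m$ in increasing order and, whenever the requirement $R_m:\ |D_s\restr_m| \geq \delta_s(m)$ is currently violated, insert into $D_s$ the largest element of $[0, m)$ not already present. Only finitely many requirements need attention at any stage because only one element enters $A$ per stage.

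The hard part is verifying a uniform upper bound $|D\restr_m| - \delta(m) \leq c$ for a constant $c$ independent of $m$. An insertion at some position $p < m^*$, made to satisfy $R_{m^*}$, also increments $|D\restr_{m'}|$ for every $m' > p$, and in adversarial enumerations (e.g.\ $A$ enumerated in descending order of value) these contributions can pile up at high indices $m'$ where $\delta$ has not yet grown. The expected remedy is a priority/potential-function argument: every over-count at $m'$ must eventually be absorbed by a genuine increment of $\delta(m')$, which is forced because $A$ is \ce and every further element entering $A \cap [0, 2m')$ eventually pushes $\delta_s(m')$ upward when parity is favorable. Pinning down a placement rule whose delay between over-count and absorption is bounded by a single constant is the technical core of the argument.

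Once this invariant is in place, $D$ is c.e.\ by construction (all stage actions are computable from $A_s$), and the decoder of the first paragraph delivers the strong-compression bound $C(A\restr_n \mid D\restr_{\floor{n/2}}) = \bigon$, as required.
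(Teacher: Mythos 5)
Your construction never establishes the one fact your decoder actually depends on, and you say so yourself: the uniform upper bound $\abs{D\restr_m}\leq \ceil{\abs{A\restr_{2m}}/2}+c$ is left as ``the technical core'' with only a hope that a priority/potential argument will supply it. This is not a minor verification detail. Your decoder reads $k=\abs{D\restr_m}$, infers a window $[2k-c,2k+c]$ for $\abs{A\restr_{2m}}$, and waits for the enumeration to enter that window; if the overshoot is unbounded the window lies entirely above the true value and the decoder never halts, so the whole argument collapses. Note also that what you are asking of $D$ is a \emph{two-sided} additive tracking of the counting function of $A$ — in effect $\abs{D\restr_m}$ must carry exactly the information $\abs{A\restr_{2m}}$, a ``gainless''-flavoured constraint of the kind the paper only achieves with much more work (Theorem \ref{VXOXdPQyhO}), and it is not at all clear that your greedy placement rule (or any rule) maintains it against an adversarial enumeration order, precisely because spillover insertions forced below a full region inflate $\abs{D\restr_{m'}}$ at positions $m'$ whose own count $\abs{A\restr_{2m'}}$ has not moved.

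The theorem needs far less, and this is how the paper proves it. One does not need $\abs{D\restr_{\floor{n/2}}}$ to approximate anything; the decoder is handed the actual string $D\restr_{\floor{n/2}}$ and can simply run the joint enumeration until $D_s\restr_{\floor{n/2}}$ equals it, so all that is required is the one-sided implication: whenever a bounded number (say $16$) of new elements enter $A\restr_n$, some new element enters $D\restr_{\floor{n/2}}$ (this is Lemma \ref{aiOlfcSNci}; the residual $\leq 16$ missing elements of $A\restr_n$ are then recovered from an $\bigon$-bit count, exactly the cardinality-offset trick you invoke from \S\ref{VjB4aXSE24}). That one-sided condition is trivial to arrange with room to spare: each time $\abs{A_s\restr_{2^n}}$ passes a multiple of $16$, enumerate the least unused element of $[2^{n-3},2^{n-2})$ into $D$; since at most $\abs{A\cap[0,2^n)}/16\leq 2^{n-4}<2^{n-3}$ such enumerations ever occur, the interval never fills, and no upper-bound or absorption analysis is needed. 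So your route is genuinely different, but as it stands it has a real gap at its central step, and the difficulty it runs into is created by the stronger invariant you chose rather than by the theorem itself.
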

Despite the simplicity of this method, there is a drawback: it often gives compressions with non-trivial gain; 
it is not gainless. As hinted above and shown in \S\ref{5OMy18z6Fq}, gainless compression is key to 
 the density problem for Kolmogorov complexity of \ce sets.
This is hard to achieve in general, so we  introduce a weaker notion.

\begin{defi}[Compression]\label{mnfu42bwoI}
Given \ce sets $A, D\subseteq\Nat$ such that 
\begin{equation}\label{rYL1DzSQhw}
\abs{D\restr_n}\leq \abs{A\restr_n}/2
\hspace{0.5cm}\textrm{and}\hspace{0.5cm}
C(A\restr_n \mid D\restr_n)=\bigon
\end{equation}
we say that $D$ is  a {\em compression of $A$} with {\em gain} $C(D\restr_n \mid A\restr_n)$. 

If $C(D\restr_n \mid A\restr_n)=\bigon$ we say that $D$ is  a {\em gainless} compression of $A$.
\end{defi}
The  {\em gain} in Definitions \ref{mnfu42bwoI} and \ref{mnfu42bwoIa} has distinct formulations that correspond to the 
underlying conditions \eqref{rYL1DzSQhw} and \eqref{WK2nBAOK8O} of the two types of compression.

Our main result is:
\begin{thm}\label{VXOXdPQyhO}
Given any computable enumeration of a set $A$ we can effectively enumerate a gainless compression $D\subseteq A$ of it. 
\end{thm}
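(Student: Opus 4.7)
The plan is to construct $D$ by a stage-wise effective procedure coupled to the enumeration of $A$. Enumerating $A$ via its c.e.\ program yields $A_0\subseteq A_1\subseteq\cdots$; in parallel we build $D_0\subseteq D_1\subseteq\cdots$, adding at most one element per stage. At stage $s$, when the approximation $A_s$ changes, the selection rule picks some $x\in A_s\setminus D_{s-1}$ only when the addition preserves the density invariant $|D_s\restr_n|\le\lfloor|A_s\restr_n|/2\rfloor$ for every $n$, and otherwise defers. A natural rule is: add the largest safe $x$ available at the present stage; this commits an element high enough that it constrains few thresholds $n$, while also tying $D$'s growth to $A$'s in a way that will make the invariant converge to equality.

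For the gainless bound $C(D\restr_n\mid A\restr_n)=\bigon$, given $A\restr_n$ together with the c.e.\ index of $A$ (an $O(1)$ advice), I would simulate the enumeration of $A$ until a stage $s^{*}$ at which $A_{s^{*}}\cap[0,n)$ coincides with the elements coded by $A\restr_n$, and continue simulating past $s^{*}$ while $A$ only grows above $n$. Running the $D$-construction in lockstep, $D\restr_n$ is read off directly, because the selection rule at each stage is a computable function of the current $A_s$ and no addition to $D\restr_n$ can be triggered by an $A$-enumeration above $n$ except through an already-available $x<n$.

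For the compression $C(A\restr_n\mid D\restr_n)=\bigon$, given $D\restr_n$ and the c.e.\ index of $A$, the key is that $|D\restr_n|$ (the count of ones in the string) pins $|A\restr_n|$ to a bounded-size set via the invariant: in the limit $|A\restr_n|\in\{2|D\restr_n|,\,2|D\restr_n|+1\}$. Enumerating $A$ and $D$ in parallel until the enumeration of $D$ below $n$ stabilises at $D\restr_n$ bounds the stage by which $A\restr_n$ has fully appeared, and the remaining $\pm 1$ ambiguity is resolved with $O(1)$ further advice (e.g.\ a single parity bit, or a disambiguation test that enumerates one further element of $D$ to see whether it lies below $n$).

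The main obstacle is maintaining the density invariant uniformly in $n$: adding $x$ to $D$ at stage $s$ increments $|D\restr_n|$ for every $n>x$, so an addition that is safe at one threshold $n_1$ may violate the cap at another $n_2>n_1$. The technically hardest step is the scheduling argument — a priority-style coordination across the countably many thresholds $n$ — showing that every deferral is eventually resolved once $A$'s enumeration has grown enough, so that $|D\restr_n|$ converges to $\lfloor|A\restr_n|/2\rfloor$ rather than lagging indefinitely. Without this convergence, $|D\restr_n|$ fails to determine $|A\restr_n|$ up to $O(1)$ and the compression direction breaks, so this convergence lemma is the crux of the proof.
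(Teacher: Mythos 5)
The decisive gap is in the gain direction. Your decompressor for $D\restr_n$ given $A\restr_n$ simulates the joint enumeration until $A_s\cap[0,n)$ agrees with $A\restr_n$ and then ``continues while $A$ only grows above $n$'', but it has no stopping rule: under your deferral scheme an element $x<n$ that is unsafe at the current stage (because adding it would breach the cap at some threshold $m>n$) can become safe, and be put into $D$, at an arbitrarily late stage, triggered solely by $A$-enumerations in $(n,m)$. Whether and how often this happens is not determined by $A\restr_n$, so the simulation cannot certify a final value of $D\restr_n$, and $C(D\restr_n\mid A\restr_n)$ is not bounded; the clause you offer as reassurance (``except through an already-available $x<n$'') is precisely the failure mode. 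This is exactly the phenomenon the paper identifies as the reason the greedy strategy fails (small $D$-enumerations induced by much larger $A$-enumerations), and its construction is designed to exclude it: a number is enumerated into $D$ only when it is itself a recent $A$-enumeration (Lemma \ref{GNuDCpKdWL}) and is the least point of a run of $4$-loaded rows beneath an $8$-loaded row, which yields condition \eqref{zZJrgEfj2g}: between any two $D$-enumerations below $n$ there must be an $A$-enumeration below $n$. Hence $D\restr_n$ can change at most once after $A\restr_n$ has settled, and the gain is $\bigon$. Some such anchoring device is what your construction lacks, and without it the main difficulty of the theorem is untouched.

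The compression direction also rests on a convergence claim that is false as stated. You assert that in the limit $|A\restr_n|\in\{2|D\restr_n|,\,2|D\restr_n|+1\}$. Let $A$ enumerate $2$, then $4$ (your rule adds $4$, the largest safe element, to $D$), then $0$, and afterwards only numbers $\geq 5$. Any later addition of $0$ or $2$ to $D$ would breach the cap at threshold $5$, where $\floor{|A\restr_5|/2}=1$ and already $4\in D$; so $|D\restr_4|=0$ permanently while $|A\restr_4|=2$, and $|D\restr_n|$ does not pin down $|A\restr_n|$ as claimed. (Exact density equality is in any case more than is needed, and more than a c.e.\ $D\subseteq A$ enumerated in real time can always deliver.) The paper instead obtains $C(A\restr_n\mid D\restr_n)=\bigon$ dynamically, via Lemma \ref{aiOlfcSNci}: the construction guarantees \eqref{OLlwOMFWbU}, that any $8$ fresh $A$-enumerations below $n$ force a fresh $D$-enumeration below $n$, while the cardinality bound $|D\restr_{2n}|\leq n$ comes from a separate counting argument (Lemma \ref{JqLGiUFnn6}) showing that at least half of the blocks in each row are $4$-loaded. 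Both halves of your argument would need this kind of load/block bookkeeping, or a substitute for it, to go through.
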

The study of compressibility of enumerations was motivated by
a challenge \cite[Question 10]{BarmpBSLHLM13} in the the study of Kolmogorov complexity: 
\begin{equation}\label{HgFLf2jT2R}
\textrm{are the \ce sets dense with respect to  initial segment complexity?}
\end{equation}
\begin{table}
\colorbox{black!3}{\arrayrulecolor{white!20!black} 
\begin{tabular}{llcll}
{\small\em Preorder} &\hspace{0.2cm} &{\small\em } &\hspace{0.2cm}  &  {\small\em Condition}  \\[0.1ex]\cmidrule[0.5pt]{1-5}
{\small Relative Kolmogorov}  &\hspace{0.2cm} &{\small $\leq_{rK}$} & \hspace{0.2cm} & {\small $C(x\restr_n\mid y\restr_n)=\textrm{O(1)}$}\\[1.2ex] 
{\small Plain Kolmogorov}  &\hspace{0.2cm} &{\small $\leq_C$ } & \hspace{0.2cm} & {\small $C(x\restr_n)\leq C(y\restr_n)+\textrm{O(1)}$}\\[1.2ex] 
{\small Prefix-free Kolmogorov}  &\hspace{0.2cm} &{\small $\leq_K$ } & \hspace{0.2cm} & {\small $K(x\restr_n)\leq K(y\restr_n)+\textrm{O(1)}$}\\[1.2ex] 
\end{tabular}}\centering
\caption{Measures of relative initial segment complexity}\label{dsABuEidRZu}
\end{table}
To be precise, sets can be classified according to their Kolmogorov complexity via the
preorders introduced by \citet{MR2030512} and shown in Table \ref{dsABuEidRZu}, 
where $K$ denotes the Kolmogorov complexity with respect to \pf machines.
 
Given $r\in\{rK, C, K\}$ we can state question \eqref{HgFLf2jT2R} formally:
\begin{equation}\label{vNwabMEGSD}
\textrm{given  \ce sets $B<_r A$ is there a \ce set $F$ with $B<_r F <_r A$?}
\end{equation}
where $B<_r A$ denotes that $B\leq_r A$ and $A\not\leq_r B$. This question remains
open despite the substantial work in  \citep{iplBarmpaliasHLM13, BLibT, apal/Day10} that we review in \S\ref{ZUXHugFEb}. 

Well-compressibility (Definition \ref{mnfu42bwoIa}) 
 is essential in combining two enumerations $A, B$ into one, containing  {\em precisely} the combined
information of $A, B$ in its corresponding prefixes, and {\em no additional information}.
This is crucial in interpolation (density) constructions in computability and its absence  can be used 
to obtain non-density with respect to related preorders \citep{BLibT, apal/Day10}. 

In \S\ref{5OMy18z6Fq} we demonstrate that  well-compressibility is key to answering \eqref{vNwabMEGSD}:
\begin{thm}\label{ba9Ccy8LTPa}
Let $A, B$ be \ce sets and $r\in \{rK,K,C\}$. If $A, B$ are well-compressible 
and $B<_{r} A$, there is a  \ce set $F$ with $B<_{r} F<_{r} A$.
\end{thm}
This reduces \eqref{vNwabMEGSD} to the following question:
\begin{equation}\label{smQiu9zIIV}
\textrm{is every \ce set  well-compressible?}
\end{equation}
In \S\ref{pdP8xPFfiU} we 
reduce \eqref{smQiu9zIIV} to the solution 
of a two-player game, the {\em balance game}, namely
finding a winning strategy for one of the players. Despite the apparent simplicity of the 
balance game, finding a winning strategy is challenging. For this reason we study
a simpler version,  the {\em $k$-even game}, which we solve for $k<4$. 

We conclude in \S\ref{dzforlz6pr} with a summary and open problems.

\section{Complexity of enumerations}
Compressibility of effective enumerations was first studied by \citet{BarzdinsCe} in the
early years of Kolmogorov complexity. 
In \S\ref{ZUXHugFEb} we review the
state-of-the-art on this topic, focusing on aspects that are  related to our contribution.
In \S\ref{4r6M2bsUrA} we obtain a strong compression of any given \ce set, thus establishing
Theorem \ref{nolyEgDBIB}.

For simplicity, in this section we adopt the convention that 
\[
\textrm{\em all inequalities involving $C$ or $K$ are up to a constant.}
\]
For example $\forall n\ C(A\restr_n \mid n)\leq \log n$  means that  $\exists c\ \forall n,\ C(A\restr_n\ \mid n)\leq \log n + c$.

Similarly $\exists^{\infty} n\ C(A\restr_n)\geq 2\log n$  means  that $\exists c\ \forall k\ \exists n>k: C(A\restr_n)\geq 2\log n-c$.

By  {\em Kolmogorov complexity of a \ce set $A$} we mean the the complexity of its prefixes, namely 
the maps $n\mapsto C(A\restr_n)$ and $n\mapsto K(A\restr_n)$. 

\subsection{Kolmogorov complexity of \ce sets}\label{ZUXHugFEb}
\citet{BarzdinsCe}  showed that the  Kolmogorov complexity of any  \ce set $A$ satisfies
\[
\forall n\ C(A\restr_n\ \mid n)\leq \log n\wedga
\forall n\ C(A\restr_n)\leq 2\log n
\]
and constructed a \ce set $B$ with $\forall n\ C(B\restr_n)\geq \log n$. 
By \citet{holzlMerkKra09} 
\[
\exists^{\infty} n\ \parb{C(A\restr_n\mid n)\ =\bigon\wedga C(A\restr_n)\leq \log n}
\]
for every \ce set $A$. \citet{Kummer96Cce} constructed a \ce set $B$ with 
\[
\exists^{\infty} n,\ C(B\restr_n)\geq 2\log n.
\]
These demonstrate the optimality of Barzdins' logarithmic bounds. 
Similar results hold for the prefix-free Kolmogorov complexity $K$ of \ce sets \cite{koba_rod, holzlMerkKra09}.

Despite its narrow range, the descriptive complexity of \ce sets  $A$ has a rich structure
(see \cite{SolovProcEntro, Kummer96Cce, iplBarmpaliasHLM13, unilow} and \cite[\S 3]{BarmpBSLHLM13}) 
with the extremes characterized by:
\begin{itemize}
\item $A$ is computable iff $C(A\restr_n)\leq \log n$ (\citet{Chaitin:76})
\item $A$ is linearly-complete iff $C(A\restr_n) \geq \log n$  (\citet{iplBarmpaliasHLM13}).
\end{itemize}
Additional facts  can be found in \citep[\S 16]{rodenisbook} and \citep{MR1872278, unilow, apalBarm13enumK}.

\begin{table}
\colorbox{black!3}{\arrayrulecolor{white!20!black} 
\begin{tabular}{llcll}
{\small\em Preorder} &\hspace{0.2cm} &{\small\em } &\hspace{0.2cm}  &  {\small\em Condition}  \\[0.1ex]\cmidrule[0.5pt]{1-5}
{\small Identity bounded}  &\hspace{0.2cm} &{\small $\leq_{ibT}$ } &	\hspace{0.2cm}  & {\small oracle-use $n$ } \\[0.8ex]
{\small Computably Lipschitz} &\hspace{0.2cm}  &{\small $\leq_{cl}$ } & \hspace{0.2cm}  &  {\small  oracle-use $n+O(1)$}\\[0.8ex]
\end{tabular}}\centering
\caption{Strong variants of the Turing reducibility}\label{ABuEidRZu}
\end{table}

\subsection{Measures of relative randomness}
Relations $\leq_{rK}, \leq_{K}, \leq_{C}$ of  Table \ref{dsABuEidRZu} and their associated degree structures classify
the c.e.\ sets by their Kolmogorov complexity:

\begin{itemize}
\item the bottom degree in $\leq_{rK}, \leq_{C}$ consists of the computable sets. \citep{Chaitin:76}.
\item the bottom degree in $\leq_{K}$ includes noncomputable sets. \citep{Solovay:75, MRtrivrealsH}.
\item  $\leq_{rK}$ implies Turing reducibility. \citep{MR2030512} 
\item there is a maximum \ce degree with respect to $\leq_{rK}, \leq_{C}, \leq_{K}$. \citep{iplBarmpaliasHLM13}.
\end{itemize}
Similar classifications exist in terms of the preorders of Table \ref{ABuEidRZu},
introduced in \cite{MR2030512, SoareDiff2004} and 
 based on restricting the oracle-use 
of a Turing reduction to a slow-growing function.
These have been extensively studied but 
differ significantly from $\leq_{rK}, \leq_{K},\leq_{C}$ as they lack maximum and least upper bound in the c.e.\ sets.
The first part of Table \ref{ABuEidRZua} offers additional comparisons 
which are relevant to the present work. 

The density problem of $\leq_{rK}, \leq_{K},\leq_{C}$ in the \ce sets is discussed in \S\ref{5OMy18z6Fq}.

\begin{table}
\setlength\fboxsep{6pt}\colorbox{black!3}{\arrayrulecolor{white!20!black} 
\begin{tabular}{cccc}
{\em\small c.e.\ sets}   &  {\em\small dense} &  {\em\small  max} &  {\em\small join}  \\[0.1ex]\cmidrule[0.5pt]{1-4}
{\small $\leq_{ibT}$ }   & {\small \xmark} & {\small \xmark} & {\small \xmark} \\[0.1ex]
{\small $\leq_{cl}$}  &  {\small \xmark} & {\small \xmark} & {\small \xmark}\\[0.1ex]
{\small $\leq_{K}$}   & {\small\bf ?} & {\small \cmark} & {\small\bf ?} \\[0.1ex] 
{\small $\leq_{C}$}  & {\small\bf ?}& {\small \cmark} & {\small\bf ?} \\[0.1ex] 
{\small $\leq_{rK}$}   & {\small\bf ?} & {\small \cmark} & {\small\bf ?} \\[0.1ex] 
\end{tabular}}\hspace{0.6cm}
\setlength\fboxsep{4pt}\colorbox{black!3}{\arrayrulecolor{white!20!black} 
\begin{tabular}{cccc}
{\small\em  left-c.e.\ reals}   &  {\em\small dense} &  {\em\small  max} &  {\em\small join} \\[0.1ex]\cmidrule[0.5pt]{1-4}
{\small $\leq_{ibT}$}   & {\small\bf ?} & {\small \xmark} & {\small \xmark}\\[0.1ex]
{\small $\leq_{cl}$}  &  {\small\bf ?} & {\small \xmark}& {\small \xmark} \\[0.1ex]
{\small $\leq_{K}$}   & {\small \cmark}& {\small \cmark}& {\small \cmark} \\[0.1ex] 
{\small $\leq_{C}$}  & {\small \cmark}& {\small \cmark}& {\small \cmark}\\[0.1ex] 
{\small $\leq_{rK}$}   & {\small \cmark} & {\small \cmark}& {\small \cmark}\\[0.1ex] 
\end{tabular}
}
\centering
\caption{Structural properties of c.e.\ sets and left-c.e.\ reals}\label{ABuEidRZua}
\end{table}

\subsection{Strong compression of enumerations}\label{4r6M2bsUrA}
We show how to obtain a strong compression for any given \ce set $A$: we can effectively enumerate a strong compression of it.

To this end we  use \citep[Lemma 2.1]{iplBarmpaliasHLM13} in the following form:
\begin{lem}\label{aiOlfcSNci}
Given $c$, a computable $f:\Nat\to\Nat$ and computable enumerations
$(A_s)$, $(D_s)$ of \ce sets $A, D$. Suppose that  for all $n, t$ and  $s<t$:
\[
|(A_t-A_s)\restr_{n}|>c
\hspace{0.3cm}\Rightarrow\hspace{0.3cm}
|D_t\restr_{f(n)}|>|D_s\restr_{f(n)}|.
\] 
Then $C(A\restr_n\ \mid D\restr_{f(n)})=\bigon$.
\end{lem}
\begin{thmx}[Theorem \ref{nolyEgDBIB}]
Given any computable enumeration of any set we can effectively enumerate a strong compression of it.
\end{thmx}
\begin{proof}
Given \ce $A$ we show how to effectively enumerate $D$ so that   
\[
C(A\restr_n\ \mid D\restr_{\floor{n/2}})=\bigon.
\]
We may assume that $\abs{A_{s+1}-A_s}\leq 1$ and $A_s\subseteq [0,s]$ for each $s$.

{\em Enumeration.} At each stage $s>0$,  if there is $n\in [3, s]$ such that
\[
16 \ \big | \ |A_s\restr_{2^n}| \fand |A_s\restr_{2^n}|> |A_{s-1}\restr_{2^n}|
\]
we enumerate $\min\left\{[2^{n-3}, 2^{n-2})-D_{s-1}\right\}$ into $D$, for the least such $n$.

{\em Verification.}
For each $n$ we have:
\[
\abs{[2^{n-3}, 2^{n-2})\cap D}\ \leq \ |A\cap[0, 2^n)|/16\ \leq\ 2^{n-4}< \ \abs{[2^{n-3}, 2^{n-2})}.
\]
This shows that  $(D_s)$ is well-defined, in the sense that
$[2^{n-3}, 2^{n-2})-D_{s-1}\neq\emptyset$ for each $n\geq 3$ and  $s>0$. 
By the definition of $(D_s)$: 
\[
|(A_t-A_s)\restr_{n}|>16
\hspace{0.3cm}\Rightarrow\hspace{0.3cm}
|D_t\restr_{\floor{n/2}}|>|D_s\restr_{\floor{n/2}}|.
\] 
 for all $n$, $s<t$. By Lemma \ref{aiOlfcSNci}, 
$C(A\restr_n\ \mid D\restr_{\floor{n/2}})=\bigon$ as required.
\end{proof}

\section{Gainless compression of  enumerations }\label{ufRuaOhnON}
We show how to obtain gainless compressions of enumerations:
\begin{thmx}[Theorem \ref{VXOXdPQyhO}]
Given any computable enumeration of a set $A$ we can effectively enumerate a gainless compression $D\subseteq A$ of it. 
\end{thmx}
As we explain below, the compression obtained by the argument of \S\ref{4r6M2bsUrA}
contains considerable additional information outside the source that we code. 
So gainless compression, in particular a proof of Theorem \ref{VXOXdPQyhO},
require a different method. Since the proof is somewhat involved, 
we first outline of the method and the required gadgets in
 \S\ref{Ojn2jmmXH5}, and use these in the formal construction of  \S\ref{qAtGCX6lh}.

\subsection{Outline for gainless compression }\label{Ojn2jmmXH5}
Let $(A_s)$ be a computable enumeration of the \ce set  $A$.
By Lemma \ref{aiOlfcSNci} it suffices to define a computable enumeration $(D_s)$ of $D$
with $\abs{D\restr_{2n}}\leq\abs{A\restr_{2n}}/2$ and 
\begin{align}
\abs{(A_s-A_t)\restr_{n}}\geq 8\ &\Rightarrow\ \abs{(D_s-D_t)\restr_n} >0\label{OLlwOMFWbU}\\
\abs{(D_s-D_t)\restr_{n}}>1\ &\Rightarrow\ \abs{(A_s-A_t)\restr_{n}} >0.\label{zZJrgEfj2g}
\end{align}
We  assume that all enumerated sets are infinite, at most one number is enumerated at any stage, and no number is enumerated at stage 0.
The first $t$ stages of  $(A_s)$ can be represented by a vector of length $t$, whose $s$th coordinate is,
a number $n$, if $n\in A_{s}-A_{s-1}$, and a {\em dot} $\cdot$ if no enumeration occurred at $s$. 

For example, consider enumerations
\begin{equation}\label{jI9Gdd3lY}
(\cdot,3,\cdot,5,\cdot,\cdot, 0,\cdot,\cdot)
\hspace{0.3cm}\textrm{and}\hspace{0.3cm}
(\cdot,\cdot,1,\cdot,\cdot,5,\cdot,\cdot, 3)
\end{equation}
The first one indicates the enumerations:  three at stage 2, five at stage 4, zero at stage 7, and no enumerations occurred in the remaining stages $t\leq 9$.
These vectors are depicted by a table in Figure \ref{MbJYQoyb8G}, by gray and black respectively. For simplicity, in the case of a joint  
enumeration of two sets $A=(A_s), D=(D_s)$,  we
assume that at any stage, at most one of the sets performs an enumeration. 
The columns of the table correspond to the stages of the enumeration,
while the rows correspond to the numbers that may be enumerated.
When a number $n$ enters $A$ at stage $s$, we color the the cells with coordinates $(i,s), i\geq n$ gray;
a similar action is done for $D$, but with the color black.

We call this the {\em $(A,D)$-table}, which will be very handy in 
visualizing the timing relationships and constraints between the two enumerations in 
the construction of the required compression $D$ of $A$.
We  identify columns and rows in the table by their index numbers, in expressions such as `the largest column' with a given property.
\begin{figure}
\begin{center}
\small 
\rotatebox{90}{\hspace{-0.5cm}$\longleftarrow$ \parbox{2mm}{\em numbers\vspace{0.00cm}} }
\renewcommand{\arraystretch}{1.1}
 \arrayrulecolor{black!90}\setlength{\tabcolsep}{6pt}
\begin{tabular}{|c|c|c|c|c|c|c|c|c|c|c|} 
\multicolumn{11}{c}{\em \hspace{0.3cm}  stages $\longrightarrow$ \hspace{4cm} \vspace{0.1cm}} \\ \hline
0 &1  &2  &3  & 4 &5  &6  & 7 &8  & 9 &$\cdots$ \\ \hline\hline
0 &  &  &  &  &  &  &\cellcolor{gray!30}    &  &  & \\ \hline
1 &  &  & \cellcolor{black!70} &  &  &  &  \cellcolor{gray!30}&    &  & \\ \hline
2 &  &  & \cellcolor{black!70} &  &  &  &  \cellcolor{gray!30}&    &  & \\ \hline
3&  &  \cellcolor{gray!30}& \cellcolor{black!70} &  &  &  &  \cellcolor{gray!30}&  &  \cellcolor{black!70}&   \\ \hline
4 &  &\cellcolor{gray!30}  &  \cellcolor{black!70}&  &  &  &  \cellcolor{gray!30}&  & \cellcolor{black!70} &   \\ \hline
5 &  &\cellcolor{gray!30}  & \cellcolor{black!70} &\cellcolor{gray!30}  &  &\cellcolor{black!70}  &  \cellcolor{gray!30}&  &  \cellcolor{black!70}&   \\ \hline
6 &  &\cellcolor{gray!30}  & \cellcolor{black!70} & \cellcolor{gray!30} &  & \cellcolor{black!70} &  \cellcolor{gray!30}&  &  \cellcolor{black!70}&   \\ \hline
{\tiny $\vdots$} &  &\cellcolor{gray!30}  & \cellcolor{black!70} & \cellcolor{gray!30} &  &\cellcolor{black!70}  & \cellcolor{gray!30}&  &\cellcolor{black!70}  &   \\ \hline
\end{tabular}
\end{center}
\caption{Table depicting the enumerations in \eqref{jI9Gdd3lY}.}\label{MbJYQoyb8G}
\end{figure}
Let  $t_s(n)$ be the largest column with a black-bar starting from row $\leq n$: 
\[
t_s(n):=\max\sqbrad{t < s}{D_{t}\restr_{n+1}\neq D_{t-1}\restr_{n+1}}.
\]
Each row $r$ consists of {\em cells}, which we call the {\em $r$-cells}, and may be divided by 
several black-bars.  The $r$-cells that lie strictly between such black-bars are called {\em $r$-blocks} or {\em blocks in row $r$}. 
Once an $r$-block is formed, the number of gray cells in it is the {\em load of the block}.

Since at stage $s$ a gray or black bar can only appear on column $s$,  
\begin{equation}\label{HEBPcRiJsU}
\textrm{the load of an $r$-block remains constant}
\end{equation}
as it does not change from the stage where the block is formed.
The {\em tail-block} of row $r$ at stage $s$ contains the cells strictly between $t_s(n)$ and $s$. Formally, the
\begin{itemize}
\item  {\em $r$-block}, or block $(b,b')$ in row $r$,  is the interval of $r$-cells  strictly between columns $b,b'$, which are consecutive values of $t_s(r), s\in \Nat$
\item  {\em tail-block} of row $r$ at $s$ is $T_s(n):=(t_s(n),s)$
\item  {\em load} of a block or tail-block $(b,b')$ at row $r$ is $(A_{b'}-A_{b})\restr_{r+1}$.
\end{itemize}
Condition \eqref{OLlwOMFWbU} requires that the load in the blocks of each row is bounded by $8$. 

Define the number of $A$-enumerations $\leq n$ since the last $D$-enumeration $\leq n$:
\begin{equation}\label{4GxJSmzCL}
a_n(s):=\absb{(A_s-A_{t_n(s)})\restr_{n+1}}
\hspace{0.3cm}\textrm{(load of the tail-block on row $n$ at $s$).}
\end{equation}
If $(A_s-A_{s-1})\restr_{n+1}\neq\emptyset$ and the tail-block of row $n$ becomes a block at the next stage,
$a_n(s)$ becomes the load of the new $n$-block.
So, to keep the block-loads  $\leq 8$, it suffices to keep the loads of the tail-blocks similarly bounded:
\begin{equation}\label{CJaVPayayl}
\forall n\leq s,\ a_n(s)\leq 8
\ \Rightarrow\ \textrm{\eqref{OLlwOMFWbU} holds.}
\end{equation}
The following are straightforward from the above definitions:
\begin{itemize}
\item $t_n(s)\leq t_{n+1}(s)\wedga t_n(s)\leq t_{n}(s+1)$, hence $T_{n+1}(s)\subseteq T_n(s)$ 
\item tail-block $T_n(s)$ at $s+1$  becomes  $(t_s(n),s+1)$ or  $(s+1,s+1)=\emptyset$
\item $T_n(s)\neq T_{n+1}(s)\iff n+1\in D_s-D_{t_n(s)}$
\end{itemize}
and $a_n(s+1)\leq a_n(s)+1$.
\begin{defi}
We say that row $r$ is {\em $p$-loaded}
at $s$ if $a_n(s)\geq p$. 
\end{defi}
Ensuring that half of the blocks of each row are sufficiently loaded will give $\abs{D\restr_{n}}\leq\abs{A\restr_{n}}/2$.
The {\em $D$-stages} are when $D$-enumerations occur; the remaining  are called {\em $A$-stages}. 
By slowing down the enumeration $(A_s)$ we ensure that $A,D$ get to enumerate all their members at disjoint sets of stages.
The enumeration at certain $A$-stages $s$ prompts a request for a $D$-enumeration, making $s+1$ a $D$-stage,
also making stage $s+2$ available for a new $A$-enumeration.

{\em Informal construction}.
A straightforward definition of $D$ would be to look for stages where 
a 8-loaded tail-block appears at some row $r$, and eliminate them by enumerating $r$ in $D$. However, in certain cases this greedy strategy enumerates into $D$ almost every number enumerated into $A$. The solution is to occasionally make smaller enumerations than necessary, hence emptying the formation of certain future
heavily loaded tail-blocks in advance. 

\subsection{Gainless compression: construction and verification}\label{qAtGCX6lh}
The idea for construction is that whenever a row becomes 8-loaded at $A$-stage $s$, we determine a {\em target} interval $[n,m]$ such that row $m$ is 8-loaded, row $n$ is 4-loaded and row $n-1$ is not 4-loaded. We enumerate $n$  into $D$ at $s+1$, which becomes a $D$-stage, and by the end of stage $s+1$, all 8-loaded rows have been eliminated.
A $D$-enumeration  will require sufficiently loaded {\em nearby} blocks; this feature
limits the gain in the compression of $A$ into $D$, giving \eqref{zZJrgEfj2g}. 

{\bf Construction.} At $s>0$, if there is a least 8-loaded row $m$:
\begin{itemize}
\item let $n\leq m$ be the least such that for each $t\in [n,m]$ the $t$-row is 4-loaded
\item make  $s+1$ a $D$-stage, say that $[n,m]$ is the {\em target} and enumerate $n$ into $D$. 
\end{itemize}
Otherwise, go to the next stage.

{\bf Verification.}
We  show that the construction is well-defined: when $n$ is requested to be enumerated in $D$ at $s+1$,
we have $n\not\in D_s$.

\begin{lem}\label{GNuDCpKdWL}
The following hold:
\begin{enumerate}[\hspace{0.3cm}(a)]
\item if row $n$ is the least 4-loaded at $A$-stage $s$ then $n\in A_s-A_{t_s(n)}$.
\item each $n$ is enumerated in $D$ at most once.
\end{enumerate}
In particular, $D\subseteq A$.
\end{lem}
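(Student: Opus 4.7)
The plan is to establish (a) first---it is the main content---and then derive (b) and the inclusion $D\subseteq A$ from it almost immediately.

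For (a), I would argue by comparing the tail-block loads of rows $n-1$ and $n$. The case $n=0$ is vacuous: since at most one number $\leq 0$ (namely $0$ itself) can ever appear in $A$, row $0$ satisfies $a_0(s)\leq 1$ and hence is never $4$-loaded, so the hypothesis is never met. For $n\geq 1$, the minimality of $n$ among $4$-loaded rows gives $a_{n-1}(s)\leq 3$, while the hypothesis gives $a_n(s)\geq 4$. Using the monotonicity property $t_s(n-1)\leq t_s(n)$ from the bulleted list of basic facts, I get $A_s-A_{t_s(n)}\subseteq A_s-A_{t_s(n-1)}$, and restricting to elements below $n$:
\[
\abs{(A_s-A_{t_s(n)})\restr_n}\ \leq\ \abs{(A_s-A_{t_s(n-1)})\restr_n}\ =\ a_{n-1}(s)\ \leq\ 3.
\]
Decomposing $(A_s-A_{t_s(n)})\restr_{n+1}$ into its restriction to $[0,n)$ and its intersection with $\{n\}$, the bound $a_n(s)\geq 4$ forces the latter to be nonempty, which is precisely the assertion $n\in A_s-A_{t_s(n)}$.

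For (b), I would argue by contradiction. Suppose the construction picks $n$ at two distinct $A$-stages $s_1<s_2$. Part (a) applied at $s_1$ gives $n\in A_{s_1}-A_{t_{s_1}(n)}$, so in particular $n\in A_{s_1}$. The enumeration of $n$ into $D$ at stage $s_1+1$ produces a $D$-change on row $n$ at column $s_1+1$, so $t_{s_2}(n)\geq s_1+1>s_1$, and therefore $n\in A_{s_1}\subseteq A_{t_{s_2}(n)}$. But part (a) applied at $s_2$ yields $n\in A_{s_2}-A_{t_{s_2}(n)}$, contradicting $n\in A_{t_{s_2}(n)}$. The inclusion $D\subseteq A$ is then immediate from (a): every $n$ entering $D$ at some stage $s+1$ was picked by the construction at the $A$-stage $s$, so $n\in A_s\subseteq A$.

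There is no real obstacle here. The main thing to watch is the bookkeeping around the definitions of $t_s(n)$ and $a_n(s)$ and what \emph{least $4$-loaded row} means; once the comparison between rows $n-1$ and $n$ is set up using $t_s(n-1)\leq t_s(n)$, (a) falls out of a single inequality chain, and (b) is a clean contradiction via (a) combined with the fact that each number enters $A$ at most once.
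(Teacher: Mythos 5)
Your proposal is correct and follows essentially the same route as the paper: part (a) via the comparison $a_{n-1}(s)<a_n(s)$ combined with $t_s(n-1)\leq t_s(n)$ and the definition of tail-load, and part (b) by noting that the $D$-enumeration of $n$ at stage $s_1+1$ forces $t_{s'}(n)>s_1$ later, so $n\in A_{t_{s'}(n)}$ and (a) can never select $n$ again. Your write-up merely makes explicit the inequality chain and the $n=0$ case that the paper leaves implicit.
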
\begin{proof}
By the hypothesis, $a_{s}(n-1)<a_s(n)$, and by   
\eqref{4GxJSmzCL} of the tail-load of  rows $n, n-1$, 
this can only happen if  $n\in A_s-A_{t_s(n)}$. Hence (a) holds.

If $n$ is enumerated into $D$ at  $s+1$,
row $n$ is the least 4-loaded at $s$ so 
\[
n\in A_s-A_{t_s(n)}
\wedga t_{s+1}(n)=s+1 \wedga n<s+1
\]
by (a).
The above condition can never hold for larger $s$, therefore $n$ cannot be enumerated into $D$ a second time.
So (b) holds.
\end{proof}

\begin{lem}\label{nJpZCPukRH}
There is no 8-loaded row in the $(A,D)$-table.
Hence \eqref{OLlwOMFWbU} holds.
\end{lem}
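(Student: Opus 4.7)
The plan is to establish by induction on $s$ that no row is ever $8$-loaded, i.e., $a_r(s) \leq 7$ for every $r$ and every $s$. Given this invariant, \eqref{OLlwOMFWbU} follows immediately: for any $t < s$ with $\abs{(A_s - A_t)\restr_n} \geq 8$, suppose for contradiction that $\abs{(D_s - D_t)\restr_n} = 0$; then no $D$-enumeration below $n$ occurs in $(t, s]$, so $t_s(n-1) \leq t$, whence $a_{n-1}(s) \geq \abs{(A_s - A_t)\restr_n} \geq 8$, contradicting the invariant. Thus $\abs{(D_s - D_t)\restr_n} > 0$.

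For the induction, the base case $s = 0$ is trivial since $a_r(0) = 0$. In the inductive step, observe how $a_r(s)$ can evolve from $a_r(s-1)$: at a $D$-stage where some $n$ is enumerated, the tail-block of every row $r \geq n$ is reset (so $a_r$ drops to $0$), while rows $r < n$ are unchanged; at an $A$-stage where some $k$ is enumerated, $a_r$ is unchanged for $r < k$ and grows by exactly $1$ for $r \geq k$. The only way the invariant could break is if an $A$-enumeration at $k$ is committed while some row $r \geq k$ already has $a_r(s-1) = 7$---and the construction is set up precisely to preempt this, substituting a $D$-enumeration via the target $[n,m]$ whenever such a risky $A$-enumeration is pending (using the slowing-down convention for $(A_s)$).

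The main obstacle is verifying that this substitution is sound at each such stage. Well-definedness of the target $n$ holds because the chain of consecutive $4$-loaded rows extending downward from $m$ is nonempty: row $m$ itself is $4$-loaded since $8$-loaded trivially implies $4$-loaded, so a least $n \leq m$ in this chain exists. Effectiveness follows by a brief case analysis: enumerating $n$ into $D$ resets $a_r$ for every $r \geq n$, covering every at-risk row $r \geq m \geq n$; rows $r \in [k, n-1]$ carry $a_r(s-1) < 7$ since $m$ is the least at-risk row, so the subsequently processed $A$-enumeration of $k$ leaves them with load at most $7$; and rows $r < k$ are unaffected by that $A$-enumeration altogether. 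This preserves $a_r(s) \leq 7$ at every stage, closing the induction.
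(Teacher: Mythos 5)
Your overall shape (an inductive load bound plus the easy derivation of \eqref{OLlwOMFWbU} from it) is reasonable, but there is a genuine gap at the crucial step: you assert that ``enumerating $n$ into $D$ resets $a_r$ for every $r\ge n$'', which is only true if $n\notin D_s$, i.e.\ if the requested enumeration is genuinely new. If $n$ were already in $D$, the instruction is vacuous, no tail-block is reset, the loaded row survives, and your induction collapses (and with it \eqref{OLlwOMFWbU}, since loads could then grow without bound while $D$ below $n$ stays fixed). Nothing in your argument rules this out, and it is not automatic: it is exactly the content of Lemma \ref{GNuDCpKdWL}, proved via the observation that the least 4-loaded row $n$ satisfies $n\in A_s-A_{t_s(n)}$, so each $n$ can be requested at most once. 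The paper's proof of the present lemma is precisely ``the trigger removes every 8-loaded row, and by Lemma \ref{GNuDCpKdWL} the requested $D$-enumerations are always possible''; you reproduce the first half but silently assume the second. Citing (or reproving) Lemma \ref{GNuDCpKdWL} at this point would close the hole.

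A secondary inaccuracy: you read the construction as preemptive, inserting the $D$-enumeration before a ``risky'' $A$-enumeration is committed, and on that reading claim $a_r(s)\le 7$ at every stage. The construction as written is reactive: the trigger is the existence of an already 8-loaded row at an $A$-stage $s$, and the resetting enumeration occurs at the $D$-stage $s+1$. So a row does carry load exactly $8$ for one stage, and your invariant is literally false for the paper's construction; the correct statement is that loads never exceed $8$ and any 8-loaded row is reset at the very next stage (with no intervening $A$-enumeration, by the stage conventions). Your derivation of \eqref{OLlwOMFWbU} can be repaired under this weaker invariant---one must track the $D$-response occurring one stage after the eighth entry below $n$---but as written it leans on the stronger bound, so you should either adjust the invariant or explain explicitly why the one-stage lag is harmless.
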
\begin{proof}
At any stage $s$, at the start of which
a $k$-loaded row appears in the $(A,D)$-table, it is removed by the end of $s$ through
an enumeration into $D$. By Lemma  \ref{GNuDCpKdWL} all requested $D$-enumerations are possible,
hence at the end of $s$ there is no 8-loaded row in the $(A,D)$-table.
\end{proof}

\begin{lem}\label{JqLGiUFnn6}
Given a row $r$ and stage $s$, suppose  there are $m_1$ many 4-loaded $r$-blocks and the remaining $m_0$ are not 4-loaded.
Then $m_1\geq m_0$.
\end{lem}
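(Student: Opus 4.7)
My plan is to proceed by induction on $s$, preserving the invariant $m_1(r,s)\geq m_0(r,s)$ for every row $r$. The base case is vacuous and nothing changes at an $A$-stage. At a $D$-stage with target $[n,m]$ a new $r$-block is closed off for every $r\geq n$: if $r\in[n,m]$ (Type A) the block has load $\geq 4$ by the choice of target, $m_1$ increments and the invariant is preserved; if $r>m$ (Type B) the block has load $a_r(s)$, which may fall below $4$. Only the Type B non-4-loaded case threatens the invariant, and for it I need $m_1(r,s)>m_0(r,s)$ at the moment the new block is created.

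The structural fact I would establish is: every non-4-loaded $r$-block $B_j$ (with target $[n_j,m_j]$ closed at stage $b_j$) has a predecessor $r$-block $B_{j-1}$, with target $[n_{j-1},m_{j-1}]$, satisfying $m_j<n_{j-1}$. Otherwise $m_j\geq n_{j-1}$ would have row $m_j$ reset at $b_{j-1}$, and since $a_{m_j}(b_j-1)\geq 8$ by 8-loadedness, at least $8$ $A$-enumerations of numbers $\leq m_j\leq r$ must occur in $(b_{j-1},b_j]$, forcing $a_r(b_j-1)\geq 8$ and contradicting non-4-loadedness of $B_j$. I would also verify that the very first $r$-block is always 4-loaded: before any $D$-enumeration $\leq r$ all rows $\leq r$ share the trivial tail-block start, so when the first $D$-enumeration $\leq r$ is triggered with target $[n_1,m_1]$ one gets $a_r\geq a_{m_1}\geq 8$.

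Iterating the observation, any maximal run of consecutive non-4-loaded $r$-blocks displays a strictly decreasing chain $m_j<n_{j-1}\leq m_{j-1}<n_{j-2}\leq m_{j-2}<\cdots$ and is always preceded by a 4-loaded $r$-block. To conclude $m_1\geq m_0$ I would inject non-4-loaded blocks into distinct 4-loaded ones by charging each non-4-loaded $B_j$ to the earlier $r$-block created by the $D$-enumeration that last reset row $m_j$; such an enumeration exists because the above observation forces the tail-block at row $m_j$ to have originated strictly before $b_{j-1}$, and it creates an $r$-block since it enumerates a number $\leq m_j<r$. The hard part will be verifying that this charging is both 4-loaded-valued and injective across distinct non-4-loaded blocks; I expect this to reduce to careful bookkeeping using the strict chain $m_j<n_{j-1}\leq m_{j-1}$ to show that different non-4-loaded blocks are linked to distinct earlier $D$-enumerations, and that the targets of those earlier enumerations necessarily extend beyond $r$ (hence are Type A and 4-loaded).
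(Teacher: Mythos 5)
Your preliminary observations are sound: the structural fact that a non-4-loaded $r$-block $B_j$ forces $m_j<n_{j-1}$, and the fact that the first $r$-block is 4-loaded, are both correct and correctly argued. But the proposal stops exactly where the difficulty of the lemma begins: the charging map is only sketched, and each of the properties it needs is left unestablished. First, existence: if no number $\leq m_j$ has ever entered $D$ before $b_j$, there is no ``enumeration that last reset row $m_j$'' to charge to; your justification only shows that the tail-block of row $m_j$ originates strictly before $b_{j-1}$, which is also the case when it originates at stage $0$ with no resetting enumeration at all. Second, 4-loadedness of the charged block: the one concrete reason you offer --- that the target of the enumeration at $c_j$ ``necessarily extends beyond $r$'' --- is unjustified; that target $[n',m']$ has $m'$ equal to the least 8-loaded row at stage $c_j-1$, which can lie far below $r$ (a $D$-enumeration can be triggered entirely by activity among small numbers long before anything happens near row $r$), so the $r$-block ending at $c_j$ need not be covered by a target and its 4-loadedness is exactly the kind of claim the lemma is trying to establish. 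Third, injectivity: if $m_k<n_j$ and no number $\leq m_k$ enters $D$ during $(b_j,b_k)$, two distinct non-4-loaded $r$-blocks $B_j,B_k$ point to the same earlier enumeration, and nothing in the sketch excludes this configuration.

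There is also a structural problem with the induction you set up: the invariant you carry over stages, namely that each row has at least as many 4-loaded as non-4-loaded blocks, records only \emph{how many} blocks are 4-loaded, not \emph{which} ones, so it cannot certify that the specific block ending at $c_j$ is 4-loaded; the charging step therefore cannot be closed within your own induction hypothesis. This is precisely the bookkeeping the paper's proof is built to handle: it inducts on the row $r$ rather than on stages, using that row $r$ has at most one block boundary not present in row $r-1$ (each number enters $D$ at most once), and it strengthens the induction hypothesis by labelling blocks as type-1, type-2, type-3 so that type-1 and type-2 blocks are 4-loaded while the number of type-1 blocks dominates the number of type-3 blocks. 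Without a comparably strengthened hypothesis, or a genuinely new argument for the three properties above, the proposal does not yet prove the lemma.
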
\begin{proof}
We label each block row by row by type-1, type-2 and type-3. We then show that type-1 and type-2 blocks are at least $4$-loaded, and in each row there are at least as many type-1 blocks as type-3 blocks. The lemma then follows. 

In row $0$ there is no block to label. Suppose that we have labeled blocks in row $r$ and we move to blocks in row $r+1$. Most blocks in row $r+1$ are just inherited from blocks in row $r$ where we keep their label. The only exception is that $r$ is enumerated into $D$ at some stage $s$. It either \begin{enumerate}[(i)]
	\item generates a new block in the end of row $r$, in which case we label it as type-1; 
	\item divides a row $r$ block $B$ of type-1 into two blocks $B_1$ and $B_2$ in row $r+1$, where $B_1$ is on the left of $B_2$, in which case we label $B_1$ as type-1 and $B_2$ as type-2; 
	\item divides a row $r$ block $B$ of type-2 or type-3 into two blocks $B_1$ and $B_2$ in row $r+1$, where $B_1$ is on the left of $B_2$, in which case we label $B_1$ as type-1 and $B_2$ as type-3. 
\end{enumerate}

	To prove that there are at least as many type-1 blocks as type-3 blocks, we do an induction on $r$ and note that the number of $(\text{type-1},\text{type-2},\text{type-3})$ block changes in the above cases are $(1,0,0)$ for case (i), $(0,1,0)$ for case (ii) and $(1,-1,1)$ or $(1,0,0)$ for case (iii). None of these breaks the fact that there are at least as many type-1 blocks as type-3 blocks. 

	By the choice of the target interval in the construction, each type-1 or type-2 block is exactly $4$-loaded when formed. Also any block that is inherited from a at least $4$-loaded block is also at least $4$-loaded. Therefore all type-1 or type-2 blocks are $4$-loaded. This completes the proof of the Lemma. 
\end{proof}

\begin{lem}
$\abs{D\restr_{n}}\leq \abs{A\restr_{n}}/2$ and \ $\abs{(D_s-D_t)\restr_{n}}>1\ \Rightarrow\ \abs{(A_s-A_t)\restr_{n}} >0$.
\end{lem}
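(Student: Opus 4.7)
The claim is the conjunction of a size bound and a gain bound, which I would handle separately.

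For the size bound $|D\restr_{2n}|\le n$, I would apply Lemma \ref{JqLGiUFnn6} to row $r=2n-1$: at least half of its $r$-blocks are 4-loaded, and each 4-loaded block contributes at least four $A$-enumerations of values $\le r$ in pairwise-disjoint column ranges. The total of these loads is therefore at most $|A\restr_{2n}|\le 2n$, so twice the number of $r$-blocks is at most $2n$, giving at most $n$ such blocks. Since each $D$-enumeration of a value $\le r$ creates exactly one new $r$-block by drawing a black bar on row $r$, this yields $|D\restr_{2n}|\le n$.

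For the gain bound $|(D_s-D_t)\restr_n|>1\Rightarrow|(A_s-A_t)\restr_n|>0$ I would prove the contrapositive. Assume no value $<n$ is enumerated into $A$ during $(t,s]$. The key monotonicity is that for every $r<n$, the tail-load $a_r$ can only decrease on $(t,s]$, since raising it requires an $A$-enumeration of value $\le r<n$. Combining this with Lemma \ref{nJpZCPukRH} (no row is 8-loaded at the end of any stage), it follows that no row below $n$ becomes 8-loaded on $(t,s]$, so every $D$-enumeration occurring there whose target contains a value $<n$ must have its 8-loaded endpoint $m\ge n$.

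Now suppose for contradiction that two values $n_1,n_2<n$ are enumerated into $D$ at stages $s_1+1<s_2+1$ in $(t,s]$, with respective targets $[n_1,m_1]$ and $[n_2,m_2]$. If $n_2\ge n_1$, then the first enumeration resets $a_{n_2}$ to $0$ and the monotonicity prevents it from climbing back to $4$, contradicting the 4-loadedness of row $n_2$ required at $s_2$. If $n_2<n_1$, then since $m_2\ge n>n_1-1$ the row $n_1-1$ lies inside $[n_2,m_2]$ and must be 4-loaded at $s_2$; but the minimality of $n_1$ in its own target forces $a_{n_1-1}(s_1)<4$, and monotonicity propagates this bound to $s_2$. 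Both cases are impossible, so at most one value $<n$ is enumerated into $D$ on $(t,s]$.

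The delicate point is the second case: one must observe that the $D$-enumerations of values $\ge n$ possibly occurring in $(s_1+1,s_2)$ do not reset any $a_r$ with $r<n$, so the monotonicity carries $a_{n_1-1}<4$ undisturbed from $s_1$ to $s_2$, while simultaneously using the observation that $m_2\ge n$ in order to place the "witness" row $n_1-1$ inside the second target interval.
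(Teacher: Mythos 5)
Your proposal is correct and follows essentially the same route as the paper: the bound $\abs{D\restr_{2n}}\leq n$ by counting 4-loaded blocks via Lemma \ref{JqLGiUFnn6} against $\abs{A\restr_{2n}}\leq 2n$, and the second clause by contraposition, using Lemma \ref{nJpZCPukRH} together with the absence of $A$-enumerations below $n$ to force the 8-loaded endpoint of any second target above $n$ and then deriving a contradiction with the 4-loadedness requirements of that target. Your two-case split (witnessing with row $n_2$ itself when $n_2\geq n_1$, and with row $n_1-1$ when $n_2<n_1$) is a slightly cleaner organization of the same argument the paper gives in its two bullet points.
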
\begin{proof}
Toward the first clause, note that each $D$-enumeration $\leq n$ produces a block in row $n$. By Lemma \ref{JqLGiUFnn6}, at least
half of these $\abs{D\restr_{n}}$ many blocks are 4-loaded, so $\abs{A\restr_{2n}}\geq 4\abs{D\restr_{n}}/2$. Hence $\abs{D\restr_{2n}}\leq \abs{A\restr_{n}}/2$.

Toward the second clause, suppose that there is no enumeration into $A\restr_n$ in the interval of stages $(t,s]$. 
If there is no $D\restr_n$-enumeration in $(t,s]$, there is nothing to prove.
Otherwise, 
some $n'<n$ is enumerated into $D$ at a stage $s'\in (t,s]$, so row $r$ for any $n'\leq r\leq n$ is $0$-loaded at stage $s'+1$, and row $r$ for any $r<n'$ is at most $7$-loaded. 

Since $(A_s-A_t)\restr_{n}=\emptyset$, the load of row $r$ for all $r\leq n$ does not increase until stage $s$. By construction, any later $D$-enumeration $<n$ requires a target interval $[d', d]$ with $d'<n$. 

If $d<n'$, then row $d$ is $8$ loaded by construction, but also at most $7$-loaded by the above discussion, a contradiction. If $d\geq n'$, then row $\max\{d,n\}$ is at least $4$ loaded by construction, but also $0$ loaded by the above discussion, also a contradiction. Therefore, during the stages in $(t,s]$ there can be at most one enumeration in $D\restr_n$.
\end{proof}

This completes the proof of Theorem \ref{VXOXdPQyhO}.

\section{Density of enumerations}\label{5OMy18z6Fq}
The question of density in the Kolmogorov complexity of \ce sets
has remained open, despite the extensive work cited in \S\ref{ZUXHugFEb} as well as: 
\begin{enumerate}[(i)]
\item the \ce sets are {\em not dense} in $\leq_{ibT}, \leq_{cl}$\ \  \citep{BLibT,apal/Day10}.
\item the \ce sets are downward dense in $\leq_{rK}, \leq_C, \leq_K$\ \  \citep{iplBarmpaliasHLM13}.
\end{enumerate}
where $\leq_{ibT}$,  $\leq_{cl}$ from Table \ref{ABuEidRZu} 
are stronger versions of $\leq_{rK}$ introduced by \citet{MR2030512}.
The non-density of stringent versions of $\leq_{rK}$ and the downward density of it indicates the non-triviality of the problem.
The  method of \citet{Sacks64} for the interpolation of $B$ between $A,C$ involves simultaneous coding
of $A$ and  parts of $C$ into $B$. This double-coding becomes  challenging in preorders which map $n$-bit segments of one set
to $n$-bit segments of another set. 

We show that the compressibility of enumerations that we introduced 
and studied is crucial in adapting the Sacks density method to $\leq_{rK}, \leq_C, \leq_K$.
\begin{thmx}[Theorem \ref{ba9Ccy8LTPa}]
Let $A, B$ be \ce sets and $r\in \{rK,K,C\}$. If $A, B$ are well-compressible 
and $B<_{r} A$, there is a  \ce set $F$ with $B<_{r} F<_{r} A$.
\end{thmx}
Toward the proof of Theorem \ref{ba9Ccy8LTPa} we need:
\begin{lem}\label{o4KWPEixhN}
Given a \ce set $A$ and $k>0$, 
the following are equivalent:
\begin{enumerate}[\hthree (i)]
\item $A$ is well-compressible  
\item there exists \ce $D$ such that $A\equiv_{rK} D\oplus \emptyset$. 
\end{enumerate}
\end{lem}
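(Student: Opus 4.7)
The plan is to exploit the fact that, for any set $D$, the join $D\oplus\emptyset$ is essentially $D$ stretched by a factor of two: $(D\oplus\emptyset)\restr_{2m}$ and $D\restr_m$ are interconvertible with $\bigon$ overhead. Once this observation is in place, both directions reduce to translating between the strong-compression indexing $\floor{n/2}$ and the $\leq_{rK}$ indexing $n$, the only friction being parity, which costs a single appended bit of Kolmogorov complexity.

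For the direction $(i)\Rightarrow(ii)$, I will fix a gainless strong compression $D$ of $A$ and claim $A\equiv_{rK} D\oplus\emptyset$. The direction $A\leq_{rK}D\oplus\emptyset$ is immediate: from $(D\oplus\emptyset)\restr_n$ one trivially reads off $D\restr_{\floor{n/2}}$, and strong compression then delivers $A\restr_n$ with $\bigon$ overhead. For $D\oplus\emptyset\leq_{rK}A$, I split on parity. When $n=2m$, gainlessness yields $D\restr_m$ from $A\restr_{2m}$ with $\bigon$ overhead, and $D\restr_m$ computes $(D\oplus\emptyset)\restr_{2m}$. When $n=2m+1$, I first pass from $A\restr_{2m+1}$ to $A\restr_{2m+2}$ at the $\bigon$ cost of one appended bit, then apply gainlessness at index $2m+2$ to obtain $D\restr_{m+1}$, from which $(D\oplus\emptyset)\restr_{2m+1}$ is read off.

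For $(ii)\Rightarrow(i)$, given c.e.\ $D$ with $A\equiv_{rK}D\oplus\emptyset$, I claim the same $D$ is already a gainless strong compression of $A$. Strong compression: from $D\restr_{\floor{n/2}}$ I compute $(D\oplus\emptyset)\restr_{2\floor{n/2}}$, apply $A\leq_{rK}D\oplus\emptyset$ at index $2\floor{n/2}$ to obtain $A\restr_{2\floor{n/2}}$, and extend by at most one bit to $A\restr_n$. Gainlessness is symmetric: from $A\restr_n$ I drop at most one bit to reach $A\restr_{2\floor{n/2}}$, apply $D\oplus\emptyset\leq_{rK}A$ to recover $(D\oplus\emptyset)\restr_{2\floor{n/2}}$, and extract $D\restr_{\floor{n/2}}$.

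The whole argument is essentially definition-unwinding, and there is no deep obstacle; the only place requiring care is the odd-parity handling, where the one-bit discrepancy between $D\restr_m$ and $D\restr_{m+1}$ (correspondingly, between $A\restr_{2m+1}$ and $A\restr_{2m+2}$) is absorbed using the trivial bound $C(\sigma\cdot b\mid\sigma)=\bigon$ for a single appended bit. The parameter $k$ in the hypothesis plays no role in the equivalence as written; the same template, with $k$ copies of $\emptyset$ on the right, captures the analogous equivalence for the $\epsilon=1/k$ variant of well-compressibility.
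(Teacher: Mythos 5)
Your proof is correct and is essentially the same argument as the paper's: both directions amount to chaining the defining $\bigon$ conditional-complexity bounds with the observation that $D\restr_{\floor{n/2}}$ and $(D\oplus\emptyset)\restr_n$ are interconvertible up to $\bigon$, i.e.\ that $D$ is a gainless strong compression of $D\oplus\emptyset$. The paper states this chaining tersely; you merely make the parity/one-bit bookkeeping explicit, and your remark that $k$ plays no role in the statement is also accurate.
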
\begin{proof}
Since $D$ is a gainless strong compression of $D\oplus \emptyset$ we get (ii)$\to$(i).

Assuming (i),  there exists \ce $D$ such that 
\[
C(A\restr_n\ \mid D\restr_{\floor{n/2}})=\bigon
\wedga
C(D\restr_{\floor{n/2}}\ \mid A\restr_n)=\bigon.
\]
Since 
\[
C(D\oplus \emptyset\restr_n\ \mid D\restr_{\floor{n/2}})=\bigon
\wedga
C(D\restr_{\floor{n/2}}\ \mid D\oplus \emptyset\restr_n)=\bigon
\]
we get $A\equiv_{rK} D\oplus \emptyset$ so (ii) holds.
\end{proof}
By Lemma \ref{o4KWPEixhN} it is not hard to see that any pair of well-compressible sets has a least upper bound
with respect to $\leq_{rK}$. We do not know if this is true of all c.e.\ sets, and this question is related to the density question for 
$\leq_{rK}$. It is also non-trivial since some pairs of \ce sets do not even have a common upper bound with respect to 
the stronger $\leq_{ibT}, \leq_{cl}$ in the \ce sets \cite{barmp05ciesw, fanyunear}.

A similar fact holds for $\epsilon$-compression, as defined in \S\ref{VjB4aXSE24}.
Let 
\[
A\oplus_k B:=\sqbrad{k\cdot i}{i\in A}\cup \sqbrad{k\cdot j+t}{j\in B\wedga t\in (0, k)}
\] 
so $\oplus=\oplus_2$. Given $k>0$, by a similar argument,
$A$ has a gainless strong $1/k$-compression iff $A\equiv_{rK} D\oplus_k\emptyset$ for some \ce set $D$.

The proofs in \S\ref{hnMTnBLF4b}, \S\ref{iRG4I4er9e} are based on 
the  density method of \citet{Sacks64}, a priority argument.
Although the presentation is self-contained, the reader would benefit from
prior knowledge of this standard technique from computability theory.

\subsection{Density of enumerations for $rK$}\label{hnMTnBLF4b}
Suppose that $A, B$ are well-compressible \ce sets with $B<_{rK} A$, and
let $\leq_{ibT}$ denote Turing reducibility with oracle-use the identity function.
By Lemma \ref{o4KWPEixhN} there exist \ce $B\asto, A\asto$ such that
$B\equiv_{rK}  B\asto \oplus \emptyset$, $A\equiv_{rK}  \emptyset \oplus A\asto$.
The construction will monitor the numbers entering $A\asto$ and direct some of them into $D$. So 
$D\subseteq A\asto$, $D\leq_{ibT} A$ and $B\leq_{rK} B\asto\oplus D\leq_{rK} A$. 
For $B<_{rK} B\asto\oplus D <_{rK} A$ we also need:
\[
B\asto\oplus D\not\leq_{rK} B \wedga A\not\leq_{rK} B\asto\oplus D.
\]
\begin{defi}
An {\em $rK$-functional} is a \ce operator $\Phi$ on $\twomel$ with
\begin{itemize}
\item $\Phi(\sigma)$ is a \ce subset of $2^{|\sigma|}$, uniformly  in $\sigma$
\item $\abs{\Phi(\sigma)}=\bigon$ and every string in $\Phi(\tau)$ has a prefix in $\Phi(\sigma)$
\end{itemize}
for each $\sigma\prec\tau$.
The extension of $\Phi$ to $\twome$ is: 
\[
\Phi(A):=\sqbrad{X\in\twome}{\forall n,\ X\restr_n\in \Phi(A\restr_n)}.
\]
Let $(\Phi_i)$ an effective enumeration of all $rK$ functionals.
\end{defi}

We define the {\em lengths of agreement}:
\begin{align*}
p_s(e):=&\max\sqbrad{\ell}{\exists t\leq s\  \parb{B\asto\oplus D\restr_{\ell}\ \in \Phi_e(B\restr_{\ell})}[t]}\\[0.2cm]
q_s(e):=&\max\sqbrad{\ell}{\exists t\leq s\  \parb{A\restr_{\ell}\ \in \Phi_e(B\asto\oplus D\restr_{\ell})}[t]}
\end{align*}
and let $p(e):=\lim_s p_s(e)$, $q(e):=\lim_s q_s(e)$.

Since the $\Phi_e$ are $rK$ reducibilities and  $p_s(e), q_s(e)$ are nondecreasing in $s$: 
\begin{align}
B\asto\oplus D\not\in \Phi_e(B)&\iff  p(e)<\infty\label{2hLcQvny5M}\\[0.2cm]
A\not\in \Phi_e(B\asto\oplus D)&\iff q(e)<\infty.\label{2hLcQvny5Ma}
\end{align}
We will satisfy the priority list $P_0>N_0>P_1>N_1>\cdots $ of requirements:
\[
P_e:\ B\asto\oplus D\not\in \Phi_e(B)
\hspace{0.3cm}\textrm{and}\hspace{0.3cm}
N_e:\ A\not\in \Phi_e(B\asto\oplus D)
\]
Let $(A\asto_s)$ be a computable enumeration of $A\asto$ such that $\abs{A\asto_{s+1}-A\asto_{s}}= 1$

We define $D$ by filtering the enumerations $a\in A\asto_{s+1}-A\asto_{s}$ under the rules:
\begin{enumerate}[\hspace{0.3cm}(i)]
\item if $a< p_s(e)$ then $P_e$ wishes to enumerate $a$ into $D$ at stage $s+1$
\item if $a< q_s(e)$ then $N_e$ wishes to avoid enumerating $a$ into $D$ at stage $s+1$
\end{enumerate}
prioritized according to the list of the requirements.
At stage $s+1$ we say that {\em $P_e$ requires attention}  if (i) holds, and say that
{\em $N_e$ requires attention} if (ii) holds.

{\bf Enumeration of $D$.} At stage $s+1$, if $a\in A\asto_{s+1}-A\asto_{s}$
let $e$ be the least such that $P_e$ or $N_e$ requires attention, if such exists, and:
\[
\textrm{if $P_e$ requires attention and $a>\max_{i<e} q_s(i)$,  enumerate $a$ into $D$.}
\]
Otherwise, go to the next stage.

{\bf Verification.}
Clearly, $D\subseteq A\asto$, so  as we explained above, $B\leq_{rK} B\asto\oplus D\leq_{rK} A$. 

Assuming $A\not\leq_{rK} B$, by \eqref{2hLcQvny5M} , \eqref{2hLcQvny5Ma} it remains to show: 
$\forall i,\ \parb{p(i)<\infty\wedga q(i)<\infty}$.

We use induction on $i$: suppose that the claim holds for all $i<e$, 
so $p(i), q(i), i<e$ exist and there exists 
$k_e$ be such that $k_e>p(i), q(i), i<e$.

For a contradiction, assume that $p(e)=\infty$ or $q(e)=\infty$. 

If $p(e)=\infty$ then $B\asto\oplus D\leq_{rK} B$ by \eqref{2hLcQvny5M}.
Since $p_s(e)$ is non-decreasing in $s$ and $D\subseteq A\asto$, we get 
$\emptyset\oplus A\asto \leq_{rK} B\asto\oplus D$, so $A\leq_{rK} B$, a contradiction.

If $q(e)=\infty$, by \eqref{2hLcQvny5Ma} 
we get $A\leq_{rK} B\asto\oplus D$. 
Since $q_s(e)$ is non-decreasing in $s$, 
we get that $D$ is computable, so
$B\asto\oplus D\leq_{rK} B\asto\oplus \emptyset\leq_{rK} B$, and then 
$A\leq_{rK} B$. This contradicts the hypothesis,  concluding the proof of $q(e)<\infty$, the induction step and the proof of 
the clause of Theorem \ref{ba9Ccy8LTPa} for $rK$.

\subsection{Density of enumerations for $K,C$}\label{iRG4I4er9e}
We adapt f \S\ref{hnMTnBLF4b} to $K,C$, establishing the remaining parts of 
Theorem \ref{ba9Ccy8LTPa}.
\begin{lem}\label{gRbSp5sUg}
If $r\in\{C, K\}$ and  $A, D, E$ are \ce sets with  
\[
D\oplus\emptyset\leq_r A
\hspace{0.3cm}\textrm{and}\hspace{0.3cm}
\emptyset\oplus E\leq_r A 
\]
then $D\oplus E\leq_r A$.
\end{lem}\begin{proof}
Let $(D_s), (E_s)$ be computable enumerations of $D,E$, and without loss assume  that $0\in D$.
We state the argument for $r=C$, as the other case is similar.
Let $m_n$ be the the number that is the last enumeration in $(D\oplus E)\restr_n$.

Let $\eqa$ denote equality
up to a universal constant and similarly for $\geqa$.
Then 
\[
C(A\restr_n)\ \geqa\ \max\big\{C((D\oplus\emptyset)\restr_n), C((\emptyset\oplus E)\restr_n)\big\}\ \eqa\ 
C(m_n)
\]
 and  $C(m_n)\ \eqa\  C((D\oplus E)\restr_n)$, so $D\oplus E\leq_C A$ as required.
\end{proof}
We write the proof for $\leq_K$, as the case for $\leq_C$ is similar. 

Let $A,B$ be \ce with  $B<_K A$ and as in \S\ref{hnMTnBLF4b}, let $B\asto, A\asto$ be \ce such that
\[
B\equiv_{rK}  B\asto \oplus \emptyset
\hspace{0.3cm}\textrm{and}\hspace{0.3cm}
 A\equiv_{rK}  \emptyset \oplus A\asto
\]
which exist by Lemma \ref{o4KWPEixhN}.
Again, we direct some of the $A\asto$-enumerations into $D$, in real time, so $D\leq_{ibT} A$ and  
$B\asto\oplus D\leq_{K} A$ by Lemma \ref{gRbSp5sUg}. 

Since $B \leq_{rK}  B\asto\oplus D$, we get $B \leq_{K}  B\asto\oplus D$. 
We also need  that
\[
A\not\leq_K  B\asto\oplus D
\hspace{0.3cm}\textrm{and}\hspace{0.3cm}
B\asto\oplus D\not\leq_K B\asto
\]
so we diagonalize against the  indexed relations:
\[
X\leq^e_K Y\overset{{\rm def}}{\iff} \forall n\ \ K(X\restr_{n})\leq K(Y\restr_{n})+e.
\]
To ensure that the following prioritized requirements 
\[
P_e:\ B\asto\oplus D \not \leq^e_K B
\hspace{0.3cm}\textrm{and}\hspace{0.3cm}
N_e:\ A\not \leq^e_K B\asto\oplus D
\]
are met, we redefine parameters
$p_s(e), q_s(e)$ of \S\ref{hnMTnBLF4b} with respect $\leq_K$:
\begin{align*}
p_s(e):=&\max\sqbrad{\ell}{\exists t\leq s\ \ K_t(B_t\asto\oplus D_t\restr_{\ell})\ \leq K_t(B_t\restr_{\ell})+e}\\[0.2cm]
q_s(e):=&\max\sqbrad{\ell}{\exists t\leq s\ \  K_t(A_t\restr_{\ell})\ \leq K_t(B_t\asto\oplus D_t\restr_{\ell})+e}
\end{align*}
which are nondecreasing in $s$, and set $p(e):=\lim_s p_s(e)$, $q(e):=\lim_s q_s(e)$.

Since $K_t(A_t\restr_{\ell}), K_t(B_t\restr_{\ell}), K_t(B_t\asto\oplus D_t\restr_{\ell})$ reach a limit as $t\to\infty$: 
\begin{align}
B\asto\oplus D\leq^e_K B\iff  & p(e)=\infty\label{CXTSlXa29g} \\[0.3cm]
A \leq^e_K B\asto\oplus D\iff &  q(e)=\infty\label{CXTSlXa29ga}
\end{align}
which are the analogues of \eqref{2hLcQvny5M} and \eqref{2hLcQvny5Ma}.

The enumeration of $D$ is identical to the construction of \S\ref{hnMTnBLF4b},
but with respect to the modified definitions of $p_s(e), q_s(e)$.
The verification is also similar.  

{\bf Verification.}
Clearly, $D\subseteq A\asto, D\leq_{ibT} A\asto$ so  as explained above: 
\[
B\leq_{rK} B\asto\oplus D\leq_{K} A. 
\]
Assuming $A\not\leq_{K} B$, by \eqref{CXTSlXa29g} it remains to show: 
$\forall i,\ \parb{p(i)<\infty\wedga q(i)<\infty}$.

We use induction on $i$: suppose that the claim holds for all $i<e$, 
so $p(i), q(i), i<e$ exist and there exists 
$k_e$ be such that $k_e>p(i), q(i), i<e$.

Toward $p(e)<\infty$, assume for a contradiction that
$p(e)=\infty$. By \eqref{CXTSlXa29g} we have $B\asto\oplus D\leq_{K} B$.
Since $p_s(e)$ is non-decreasing in $s$ and $D\subseteq A\asto$, we also have
$\emptyset\oplus A\asto \leq_{rK} B\asto\oplus D\asto$, so $A\leq_{K} B$. This contradicts
the hypothesis.
Toward $q(e)<\infty$, assume for a contradiction that
$q(e)=\infty$, so by \eqref{CXTSlXa29ga} 
we get $A\leq_{K} B\asto\oplus D$. 
Since $q_s(e)$ is non-decreasing in $s$, 
we get that $D$ is computable, so
$B\asto\oplus D\leq_{rK} B\asto\oplus \emptyset\leq_{rK} B$, and then 
$A\leq_{rK} B$. This contradicts the hypothesis,  concluding the induction step and the proof of 
the clause of Theorem \ref{ba9Ccy8LTPa} for $K$.

\section{Compression games}\label{pdP8xPFfiU}
The  unanswered question, whether every \ce set $A$ has a gainless strong compression, can be approached in terms of
two-player games between player-1 enumerating $A$ and player-2 attempting to enumerate a compression $D$ of $A$.

\subsection{Balance game}\label{l94XsTPQep}
Two players pick numbers and enumerate numbers in their corresponding sets $A, D$ during the stages of the game.
Let $A_s, D_s$ contain the numbers enumerated in $A,D$ by the end of stage $s$.
At stage $s$: 
\begin{itemize}
\item if $s\in 2\Nat$ player-1  picks any number outside $A_{s-1}$
\item if $s\in 2\Nat+1$   player-2 can either {\em pass}  or pick some $n\in 2\Nat-D_{s-1}$
\end{itemize}
where by {\em pass} we mean that he does not pick any number.\footnote{Alternatively, without loss of generality, 
we could replace condition $D\subseteq 2\Nat$ by the requirement that $D$ does not contain adjacent numbers.}
For each $n,s$ let
\[
t_s(n):=\min\sqbrad{i\leq s}{\parb{(A_s\cup D_s)-(A_i\cup D_i)}\cap [0,n] =\emptyset}
\]
and define the {\em score} $d_s(n)$ of $n$ at $s$ by
\[
d_s(n):=
\begin{cases}
\abs{(A_s-A_{t_s(n)})\cap [0,n]} &\textrm{if $t_s(n)\in 2\Nat$}\\[0.3cm]
\abs{(D_s-D_{t_s(n)})\cap [0,n]}&\textrm{otherwise.}
\end{cases}
\]
The winning condition for player 1 depends on a fixed parameter $k$: 
\[
\exists n,s:\ d_s(n)>k.
\]
Otherwise player 2 wins. This concludes the description of the {\em $k$-balance game}.

Intuitively, the goal of player 2 is to maintain a {\em balance} 
between the $A$ and $D$ enumerations in all initial segments $[0,n]$ up to an error $k$, while being confined to the even numbers. 
By Lemma \ref{aiOlfcSNci} we get:

\begin{prop}\label{VBotw9mjNK}
If player 2 has a computable winning strategy in the $k$-balance game for some $k$ then 
every \ce set is well-compressible.
\end{prop}
A converse of Proposition \ref{VBotw9mjNK} is also true: a winning strategy for player 1 
in the $k$-balance game which is uniformly computable in $k$ can be used (as a `black box') in a priority construction of a c.e.\ set $A$
such that for each $e,k$:
\[
\exists n\ \parb{C(A\restr_n\mid W_e\oplus\emptyset\restr_n)>k\ \vee\ 
C(W_e\oplus\emptyset\restr_n \mid A\restr_n)>k}
\]
where $(W_e)$ is the universal enumeration of all c.e.\ sets. In other words, we get a computable enumeration
of a c.e.\ set which is not well-compressible. 

\subsection{A simpler game}
Solving the game of \S\ref{l94XsTPQep} is essentially equivalent to deciding the density of
$\leq_{rK}$ in the \ce sets, but is non-trivial to do. Toward this goal we study a simpler game
which captures the essence of the balance game.

{\bf $k$-even game.} Assuming that $k\geq 2$, in each round:  
\begin{itemize}
\item player-1 picks a set $R$ of $k$ numbers that he has not picked earlier
\item player-2 picks $n\in 2\Nat\cap [\min R, \max R]$ that he has not picked earlier.
\end{itemize}
If player 2 runs out of moves, he loses the game. Otherwise he wins.
\begin{table}
\colorbox{black!5}{\arrayrulecolor{white!20!black} 
\begin{tabular}{cl}\toprule
{\small $\XX$}  & { a number chosen by both players}\\[0.5ex]
{\small $\OO$}  & { a number chosen by player-1 but not player-2}\\[0.5ex]
{\small $\TT$}  & { a number chosen by player-1 before player-2  responds}\\[0.5ex]
\bottomrule
\end{tabular}}\centering
\caption{Configuration vocabulary  for winning  strategy of player 1.}\label{ABuEidRZud}
\end{table}

It is not hard to see that player-1 has a winning strategy for $k=2$.
Player-1 also wins for $k=3$ but the winning strategy is not trivial.

We first show that a dynamic analysis of the game is necessary:
a winning strategy for player-1 cannot be independent of the moves of player-2.
Indeed, player-1 cannot win the {\em static game} where he reveals all his moves at once, 
even for $k=2$.

\begin{prop}
If $(R_i)_{i<n}$ are disjoint sets of positive integers of size $2$, there are distinct 
even numbers $(d_i)_{i<n}$ such that $\min R_i\leq d_i\leq\max R_i$ for all $i$. 
\end{prop}\begin{proof}
We use induction on $n$. For $n=1$ the claim clearly holds, so  assume that it holds for all $m\leq n$, we prove it for $n+1$.
Without loss of generality we  assume that $1\in R_n$ or $2\in R_n$, since otherwise we can reorder the sequence and repeatedly 
take $R_i' :=R_i-2$. 

First, suppose that both $1,2\in R_n$ or 
\[
\textrm{one of $1, 2$ is in $R_n$, and the other is not in any of $R_i, i<n$.}
\]
In  both cases we have $\min R_i>2$ for all $i<n$. We apply the induction hypothesis on $(R_i)_{i<n}$ to get $(d_i)_{i<n}$. 
Since $d_i\geq\min R_i>2$ for all $i<n$, the sequence $(d_i)_{i<n}$ does not include $2$ so we can set $d_n=2$. 

The remaining case is that  one of $1$ or $2$ is in $R_n$ and the other is in $R_j$ for some $j<n$. 
Let $a_n$ be such that $R_n-\{1,2\}=\{a_n\}$ and $a_j$ be such that $R_j-\{1,2\}=\{a_j\}$. Without loss of generality we assume that $a_n<a_j$ and set $R_j'=\{a_n,a_j\}$ and $R_i'=R_i$ for all $i<n$, $i\neq j$. Now $\min R_i'>2$ for all $i<n$. 
We apply the induction hypothesis on $(R_i')_{i<n}$ to get $(d_i)_{i<n}$. 

Again $2$ does not appear in $(d_i)_{i<n}$ so we can set $d_n=2$. Then
\begin{itemize}
\item $\min (R_j)\leq 2$ so $\min (R_j)\leq d_j$
\item $d_j\leq\max (R_j')=\max\{a_n,a_j\}=a_j$
\end{itemize}
so $d_j\leq\max (R_j)$ and $(d_i)_{i\leq n}$ is a solution. 
\end{proof}

Toward constructing a winning strategy for player-1 in the 3-even game, we
use symbols $\XX,\OO, \TT$  as shown in Table \ref{ABuEidRZud}, and strings of them to denote the  configurations 
realized during the game. 
Formally, a \textit{configuration} is a finite string of these letters, each corresponds to a number, and they are ordered such that letters on the left correspond to smaller numbers. 

For the strings of $\XX,\OO, \TT$ we assume that
\begin{itemize}
\item no number is chosen by either player between the number represented by any two adjacent letters
\item there is sufficient space around the number represented by each letter: 
 even numbers before and after the numbers that are not chosen by either player 
\end{itemize}
where the exact space required is easily computed in each case. 

For instance, when player-1 has enumerated $2,4,100,200,302,500$ and player-2 has enumerated $100,302$, we may say that a configuration $\XX\OO\XX$ is produced, where the three letters represent $100,200$ and $302$. 

We also apply subscripts to $\XX, \OO, \TT$ when we need to distinguish them.

\begin{figure}
\scalebox{0.75}{\begin{tikzpicture}[
nodeA/.style={rectangle,  minimum height=9mm,  thick, draw=gray, fill=gray!20, outer sep=6pt, inner sep=6pt},
nodeB/.style={ellipse,   minimum height=13mm,  outer sep=4pt, inner sep=2pt,   thick, draw=gray, fill=gray!5},  
nodeC/.style={ellipse,  outer sep=4pt, inner sep=6pt,   thick, draw=gray, fill=gray!5},  
]
 \node (XX) [nodeA] at (-5,2) {$\XX\XX$};
 \node (evenforce) [nodeB] at (3,2) {Player 2 has to produce adjacent even numbers};
 \node (adjEven) [nodeA] at (3,0) {adjacent even numbers};
 \node (p2loses) [nodeC] at (-3,0) {Player 2 loses};
 \draw [->] (XX) to (evenforce);
 \draw [->, dashed] (evenforce) to (adjEven);
 \draw [->] (adjEven) to (p2loses);
 \end{tikzpicture}}
\centering
\caption{Final part of the strategy of player 1  in the 3-even game.}\label{evenwinAb}
\end{figure}

\begin{thm}\label{A64lgqSrt7}
Player-1 has a winning strategy in the 3-even game.
\end{thm}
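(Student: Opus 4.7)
The plan is to exhibit an explicit adaptive strategy for Player 1, exploiting the $3{:}1$ ratio between the sizes of Player 1's and Player 2's moves. The central objective is to drive Player 2 into a state where Player 1 can play a \emph{trap} triple $\{x,y,z\}$ such that every even in $[x,z]$ has already been claimed by Player 2, leaving Player 2 no legal move.

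My strategy uses three kinds of moves. \emph{Forcing} moves: Player 1 plays $\{e-1,e,e+1\}$ with $e$ even; the interval $[e-1,e+1]$ contains only one even, so Player 2 must pick $e$ provided he has not already done so. \emph{Choice} moves: once enough evens have been claimed by Player 2, Player 1 plays a wider triple whose interval contains exactly two unused evens, causing the game tree to branch in a controlled way. \emph{Trap} moves: Player 1 plays a triple $\{x,y,z\}$ with $x,y,z$ unused by him and every even in $[x,z]$ already claimed by Player 2.

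The main obstacle is that the naive forcing strategy exhausts precisely the numbers needed for the trap. For instance, if Player 1 forces $2,6,10,\ldots$ via the triples $\{1,2,3\},\{5,6,7\},\{9,10,11\},\ldots$, then the only numbers he leaves unused in the initial segment are the multiples of $4$, which are exactly the evens Player 2 has not picked; no trap triple exists. To overcome this, Player 1 will sometimes play \emph{remote} forcing triples whose three elements lie outside the intended trap region but whose interval still extends into it; such a triple becomes forcing only once the surrounding evens have been consumed by Player 2, so the strategy is inherently adaptive and leverages Player 2's earlier responses in the choice moves.

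The proof concludes by a finite case analysis of Player 2's branches after each choice move, verifying that in every branch Player 1 reaches a trap configuration within a bounded number of rounds. The $3{:}1$ numerical advantage is what guarantees that, across the game tree, Player 1 always accumulates enough reserve numbers inside some interval whose evens Player 2 has exhausted, so that the final trap triple is available.
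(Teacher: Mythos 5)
There is a genuine gap: what you have written is a plan for a proof, not a proof. The decisive content---the explicit strategy and the ``finite case analysis of Player 2's branches'' showing that every branch reaches a trap within a bounded number of rounds---is asserted but never carried out. The one place where you do get concrete, the forcing move $\{e-1,e,e+1\}$, is exactly the self-defeating pattern you then diagnose: it burns the odd numbers that a later trap triple would need (trap elements must be unused by Player 1), and your proposed remedy (``remote'' forcing triples that ``become forcing only once the surrounding evens have been consumed'') is never shown to be realizable; engineering such configurations is precisely the hard part of the theorem, not a detail. Moreover, the closing appeal to the $3{:}1$ numerical advantage is a heuristic, not an argument: it does not by itself produce a reserve interval all of whose evens Player 2 has used, and the paper explicitly leaves open who wins for $k\geq 4$, so no soft counting principle of this kind can be doing the work for $k=3$.

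For comparison, the paper's proof avoids your obstacle by design rather than by a workaround. It first reduces the $3$-even game to a ``reduced game'' (Lemma \ref{p2yN7XSOIt}) in which Player 1 plays only \emph{even} triples, Player 2 must answer with an even number previously played by Player 1, and Player 2 is deemed to lose upon either leaving Player 1's set or picking two adjacent evens $n,n+2$; in both cases a single trap move ($\{n-1,n,n+1\}$ or $\{n-1,n+1,n+3\}$) finishes the original game, and it is available precisely because Player 1 has reserved all odd numbers. The reduced game is then won by a short explicit case analysis on configurations $\XX\XX$ and $\XX\OO\XX$ (Lemmas \ref{fZJGy86jJS} and \ref{pF7uo9bDb}). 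If you want to salvage your outline, you would need to (i) commit to picks that keep the eventual trap elements untouched (e.g.\ restrict your non-trap moves to evens, as the paper does), and (ii) actually exhibit the bounded game tree and check every branch.
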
\begin{proof}
We construct a winning strategy for player-1 where he always chooses even numbers except for the last step. With this feature, note that:
\begin{enumerate}[(i)]
\item if player-2  has chosen $2n$ which is not yet chosen by player-1, the latter can win by choosing $R=\{2n-1,2n,2n+1\}$
\item if player-2 has chosen $2n$ and $2n+2$ then player-1 can win by choosing $R=\{2n-1,2n+1,2n+3\}$
\item  any first move by player-1 (with sufficient space in-between) followed by a response by player-2 yields configuration $\XX$.
\end{enumerate} 
By (i)  we may assume that player-2 only chooses even numbers that have been chosen by player-1.
By (ii), (iii) it remains to construct a  strategy  such that: 
\begin{enumerate}[\hthree(a)]
\item from $\XX$ forces the game into configuration $\XX\XX$ (Figure \ref{evenwinA})
\item from  $\XX\XX$ force player-2 to choose  adjacent even numbers (Figure \ref{evenwinAb}).
\end{enumerate}
Part (b) is straightforward: assuming that a stage is reached where $n$, $n+k$ have been previously chosen by both players, if  
player-1 picks $R=\{n-2, n+2, n+k+2\}$
then player-2 can only pick a number in $R$. The latter results in player-2 having chosen
two adjacent even numbers. 

Finally we show how to achieve (a), as illustrated in Figure \ref{evenwinA}.
From $\XX$:
\begin{itemize}
\item player-1 picks $\TT_1$, $\TT_2$, $\TT_3$  yielding $\TT_1$ $\XX$ $\TT_2$ $\TT_3$
\item then player-2 can only choose one of  $\TT_1, \TT_2, \TT_3$. 
\end{itemize}
If player-2 chooses $\TT_1$ or $\TT_2$, configuration $\XX\XX$ as required. 
\begin{figure}\scalebox{0.7}{\begin{tikzpicture}[
nodeA/.style={rectangle,  minimum height=6mm,  thick, draw=gray, fill=gray!20, outer sep=6pt, inner sep=6pt, font=\small },
nodeB/.style={ellipse,  outer sep=4pt, inner sep=6pt,   thick, draw=gray, fill=gray!5, font=\small}, 
nodeC/.style={rectangle,  outer sep=4pt, inner sep=6pt,  minimum height=6mm, minimum width=18mm,  thick, draw=gray, fill=gray!20, font=\small },
nodeD/.style={cloud, cloud puffs=18,cloud ignores aspect,  outer sep=4pt, inner sep=6pt,   thick, draw=gray}, ]
 \node (X) [nodeA] at (3,2) {$\XX$};
 \node (TXTT) [nodeB] at (5.5,2) {$\TT\XX\TT\TT$};
 \node (TXTX) [nodeA] at (5,-0.5) {$\TT\XX\TT\XX$ which is $\XX\OO\XX$};
  \node (TXOTXT) [nodeB] at (9.5,-0.5) {$\TT\XX\OO\TT\XX\TT$};
 \node (XXTT) [nodeC] at (13,3) {$\XX\XX\TT\TT$};
 \node (TXXT) [nodeC] at (13,2) {$\TT\XX\XX\TT$};
 \node (XXOTXT) [nodeC] at (13,1) {$\XX\XX\OO\TT\XX\TT$}; 
 \node (TXXTXT) [nodeC] at (13,0) {$\TT\XX\XX\TT\XX\TT$}; 
 \node (TXOXXT) [nodeC] at (13,-1) {$\TT\XX\OO\XX\XX\TT$}; 
 \node (TXOTXX) [nodeC] at (13,-2) {$\TT\XX\OO\TT\XX\XX$};  
 \node (allXX) [nodeD] at (17,0.5) {all are $\XX\XX$};  
\draw [->] (X) to (TXTT);
\draw [->,  out=-100, in = 80, dashed] (TXTT) to (TXTX);
\draw [->, dashed] (TXTT) to (TXXT);
\draw [->, out=10, in = 180, dashed] (TXTT) to (XXTT);
\draw [->, out=45, in = 180, dashed] (TXOTXT) to (XXOTXT);
\draw [->, out=15, in = 180, dashed] (TXOTXT) to (TXXTXT);
\draw [->, out=-11, in = 180, dashed] (TXOTXT) to (TXOXXT);
\draw [->, out=-35, in = 180, dashed] (TXOTXT) to (TXOTXX);
\draw [->, out=0, in = 180] (TXTX) to (TXOTXT);
\draw [->, out=0, in = 180, dotted] (TXOTXX) to (allXX);
\draw [->, out=0, in = 180, dotted] (TXOXXT) to (allXX);
\draw [->, out=0, in = 180, dotted] (TXXTXT) to (allXX);
\draw [->, out=0, in = 180, dotted] (XXOTXT) to (allXX);
\draw [->, out=0, in = 180, dotted] (XXTT) to (allXX);
\draw [->, dotted, out=0, in = 180, dotted] (TXXT) to (allXX);
\end{tikzpicture}}
\centering
\caption{Main part of the strategy of player 1.}\label{evenwinA}
\end{figure}
Otherwise he chooses $\TT_3$, resulting in  $\XX\OO\XX$, which we write as $\XXss_1\OOss_1\XXss_2$. Then
\begin{itemize}
\item player-1 picks $\TT_1,\TTss_2,\TTss_3$  yielding $\TTss_1\XXss_1\OOss_1\TTss_2\XXss_2\TTss_3$
\item player-2 can only choose one of  $\TT_1, \OO_1,\TT_2, \TT_3$. 
\end{itemize}
If player-2  chooses any of $\TT_1,\OO_1,\TT_2$, $\TT_3$ then  $\XX\XX$ is produced, as required.
\end{proof}

This proof does not extend to $k>3$, in which case
we do not know the winner.

\section{Conclusion and  problems}\label{dzforlz6pr}
A natural notion of compression and gain of enumerations was introduced, 
and the importance of gainless compression in 
the Kolmogorov complexity of computably enumerable sets was demonstrated. 
We showed that every \ce set is compressible certain ways while our methods fall short of answering the following questions:
\begin{enumerate}[\hspace{0.3cm}(i)]
\item is every \ce set well-compressible?
\item are the \ce sets dense in the $\leq_K$, $\leq_C$, $\leq_{rK}$ degrees?
\end{enumerate}
We showed that a positive answer to (i) gives a positive answer to (ii) while both are related to the existence of
a least upper bound for any pair of c.e.\ sets in these preorders. 
Indeed, well-compressible \ce sets have a least upper bound in the \ce sets
and the density of $\leq_K$ in the \ce sets would follow from 
a positive answer to
\[
\textrm{does every \ce set $A$ have a \ce set $D$ with $A\equiv_K D\oplus\emptyset$ ?}
\]
We reduced (i) to the solution of a two-player enumeration game, which we approached by solving a simplified version of it.
The outcome of these games depends on the order of enumerations, so they  are outside the framework of
\cite{LachlanGames, KummerGamesTAMS}.

\bibliographystyle{abbrvnat}
\bibliography{sinum}
\end{document}